\documentclass{article}

\usepackage{microtype}
\usepackage{graphicx}
\usepackage{subcaption}
\usepackage{booktabs} 

\usepackage{hyperref}

\usepackage[preprint]{icml2026}

\usepackage{amsmath}
\usepackage{amssymb}
\usepackage{mathtools}
\usepackage{amsthm}

\usepackage[capitalize,noabbrev]{cleveref}

\theoremstyle{plain}
\newtheorem{theorem}{Theorem}[section]
\newtheorem{proposition}[theorem]{Proposition}
\newtheorem{lemma}[theorem]{Lemma}

\theoremstyle{definition}

\newtheorem{assumption}[theorem]{Assumption}
\theoremstyle{remark}
\newtheorem{remark}[theorem]{Remark}

\usepackage{blindtext}
\usepackage{bm}
\usepackage{bbm} 
\usepackage{comment}
\usepackage{subcaption}
\usepackage{booktabs}
\usepackage[table]{xcolor} 
\usepackage[dvipsnames]{xcolor}
\usepackage[safe]{tipa} 
\usepackage{color, colortbl}
\usepackage[misc,geometry]{ifsym} 

\newcommand{\tableCellHeight}{1}
\newcommand{\tabstyle}[1]{
  \setlength{\tabcolsep}{#1}
  \renewcommand{\arraystretch}{\tableCellHeight}
  \centering
  \small
}
\definecolor{tabhighlight}{rgb}{0.8,0.92,0.8}

\usepackage[textsize=tiny]{todonotes}

\usepackage{threeparttable}
\usepackage[nointegrals]{wasysym}
\usepackage{array}
\usepackage{booktabs}
\usepackage{multirow}
\usepackage{enumitem}
\usepackage{xcolor}
\usepackage{amsthm} 
\usepackage{amsmath}
\usepackage{amsthm}
\usepackage{amssymb}

\usepackage{tcolorbox}

\usepackage[textsize=tiny]{todonotes}

\icmltitlerunning{Semantic Robustness Certification for Vision-Language Models}

\begin{document}

\twocolumn[
  \icmltitle{Semantic Robustness Certification for Vision-Language Models}



  \icmlsetsymbol{equal}{*}

    \begin{icmlauthorlist}
    \icmlauthor{Peiyu Yang}{unimelb}
    \icmlauthor{Paul Montague}{dstg}
    \icmlauthor{Feng Liu}{unimelb}
    \icmlauthor{Andrew C. Cullen}{unimelb}
    \icmlauthor{Amardeep Kaur}{dstg}\\
    \icmlauthor{Christopher Leckie}{unimelb}
    \icmlauthor{Sarah M. Erfani}{unimelb}
  \end{icmlauthorlist}

  \icmlaffiliation{unimelb}{School of Computing \& Information Systems, University of Melbourne, Australia}
  \icmlaffiliation{dstg}{Defence Science and Technology Group, Australia}

  \icmlcorrespondingauthor{Peiyu Yang}{peiyu.yang@unimelb.edu.au}

  \icmlkeywords{Machine Learning, ICML}

  \vskip 0.3in
]



\printAffiliationsAndNotice{}  

\begin{abstract}
Vision-language models (VLMs) are now widely used in downstream tasks. However, real-world applications often expose VLMs to distribution shifts induced by semantic variation (e.g., shape, size, and style). Robustness certification determines if a model’s prediction changes when transformations are applied to its input. While most certification frameworks study geometric or pixel-level transformations over inputs, this work proposes a novel framework that enables certifying VLM robustness under semantic-level transformations. Leveraging the open-vocabulary capability of VLMs, we use text prompts as semantic proxies to construct transformations parameterized by an extent that controls the degree of semantic variation. By characterizing the VLM decision boundary in closed form, our framework quantitatively certifies extent intervals for which the predicted class remains unchanged under the semantic transformation. Our framework is the first to certify VLM robustness under semantic-level variations without requiring additional data for each variation, making it practical to apply. Experiments on both synthetic and real-world data show that our framework enables certifying robustness under diverse semantic variations across scenarios. Code is available at \url{https://github.com/ypeiyu/vlm-semantic-cert}.
\end{abstract}

\section{Introduction}
Vision-language models (VLMs)~\cite{radford2021learning,li2022blip,alayrac2022flamingo} align images and text into a shared embedding space, enabling direct matching between vision and language for open-vocabulary reasoning. This transferable interface has made VLMs a foundation for diverse downstream tasks such as detection, classification, and visual question answering~\cite{du2022learning, miyai2023locoop, xiao2024any, li2024one}. Despite their strong performance, VLM predictions can be fragile to visual semantic variations, raising concerns in high-stakes applications~\cite{fang2022data,crabbe2023interpreting}.

\begin{figure*}
\centerline{\includegraphics[width=0.999\textwidth]{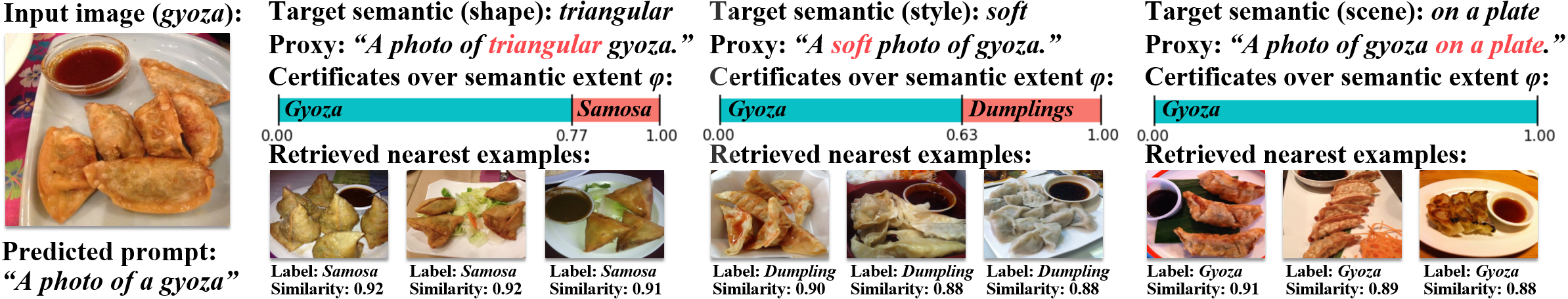}}
	\centering
\caption{Illustration of our semantic robustness certificates for VLMs. Each column specifies a target semantic with a text proxy. The certified prediction-invariant intervals are visualized over a normalized semantic extent $\varphi\in[0,1]$. Nearest images are retrieved from dataset via similarity to the transformed embedding ($\varphi=1$) as visual references, with labels and similarities shown.
    }
	\label{fig_certificates}
\end{figure*}

To provide guarantees of a model's prediction invariance under input variations, robustness certification has been widely studied~\cite{zhang2018efficient,cohen2019certified}. Given allowable transformations $\gamma$ constrained by an extent $\varphi$ that bounds the strength of $\gamma$, it aims to determine the range of $\varphi$ for which the prediction remains unchanged under transformations.
Most certificates~\cite{lecuyer2019certified,cohen2019certified} focus on pixel-level transformation within an $L_p$ ball, but cannot capture real-world semantic variations. Other works extend certification to closed-form geometrical transforms (e.g., rotations and translations)~\cite{balunovic2019certifying,li2021tss}, but they remain limited to a small set of hand-designed transformations. Recent works perform certification in the latent space of generative models~\cite{mirman2021robustness,yuan2023precise}, enabling advanced semantic transformations (e.g., facial attributes and weather conditions). However, the requirement of substantial training data for each semantic variation limits their practical use~\cite{wang2023patch}. Since semantics are highly entangled in the input space, it is challenging for existing works to formulate transformations for these variations.


In this work, we reformulate certification for the VLM embedding space, where semantics are encoded in the geometry induced by cosine similarity, which supports formalizing semantic transformations.
%
%
Leveraging the open-vocabulary grounding of VLMs, we use a pair of text prompts as semantic proxies to specify the source and target semantics of a variation. We  identify that this semantic variation is confined to a two-dimensional subspace spanned by the corresponding textual embeddings. Projecting an image embedding onto this subspace yields a semantic extent that quantifies the strength of the target semantic relative to the source. By varying this extent, we construct a parameterized transformation in embedding space that models the semantic variation.
Figure~\ref{fig_certificates} shows that, for an image of \textit{Gyoza}, text prompts can serve as semantic proxies for diverse variation types (e.g., shape, style, and scene), allowing the construction of semantic transformations for specific target semantics (e.g., \textit{triangular}, \textit{soft}, and \textit{on a plate}).

With the established transformation, we develop a certification framework over the semantic extent. We characterize the decision boundary of VLM classifiers, where the embedding space is partitioned into Voronoi decision regions. The closed-form decision boundary allows us to analytically determine prediction changes under the transformation. Our framework produces a precise partition of the extent range into prediction-invariant intervals, with each interval annotated by its predicted class.
Figure~\ref{fig_certificates} visualizes prediction-invariant intervals certified over the semantic extent $\varphi$. For the target semantic \textit{triangular}, increasing $\varphi$ strengthens the \textit{triangular} attribute of the input \textit{Gyoza}. The certificate shows that the prediction remains \textit{Gyoza} under this transformation for $\varphi<0.77$ and flips to \textit{Samosa} beyond it.
Our framework is evaluated on both generated and real-world images under semantic variation across diverse domains. Results show that our semantic transformations remain consistent with the intended semantic variation and accurately capture prediction changes along the semantic extent for VLMs.
Overall, our work is the first to certify semantic robustness without requiring any additional data for each variation. 
This provides a practical basis for downstream applications to monitor semantic drift, diagnose failure modes under semantic variations, and characterize the evolution of a model’s semantic understanding.

Our main contributions are summarized as follows.
\begin{enumerate}
[leftmargin=*,itemsep=3pt,topsep=1pt,parsep=1pt]
  \item We leverage text prompts as semantic proxies to formalize  semantic transformations for VLMs.
  \item By characterizing a VLM's decision boundary, our framework certifies precise prediction-invariant intervals.
  \item Evaluations on both synthetic and real-world data show that our transformations align with the target semantics and that the certificates match prediction changes.
\end{enumerate}

\section{Related Work}

\noindent\textbf{Robustness in VLMs.}
Vision-language models connect visual inputs with natural-language supervision, supporting zero-shot recognition, open-vocabulary segmentation, and visual reasoning~\cite{radford2021learning,li2022blip,alayrac2022flamingo,zou2023segment}.
Despite this flexibility, VLM predictions can degrade under out-of-distribution inputs and adversarial perturbations~\cite{schlarmann2024robust,zhang2024lapt,zhu2025ants}.
In response, existing work has studied VLM robustness through distribution-shift adaptation~\cite{ming2022delving,shu2023clipood}, adversarial and visual-security analysis~\cite{zhao2023evaluating,schlarmann2024robust,li2025mmt,xu2025fakeshield}, and multimodal optimization or distillation~\cite{zhang2024dual,li2025ammkd,zhou2026boosting}.
Related explanation methods connect classifier predictions to human-interpretable concepts or local evidence~\cite{kim2018interpretability,yang2023local,yang2023recal}, while recent VLM-based counterfactual methods use embedding-space semantic structure to explain classifier behavior~\cite{kim2023grounding}. Despite known modality gaps~\cite{liang2022mind}, recent analyses further suggest that VLM representation spaces encode semantic structures useful for interpretation~\cite{bhalla2024interpreting,sonthalia2026rankability}.
These studies improve robustness or interpret model behavior under observed shifts, whereas our work provides closed-form certificates of prediction-invariant intervals over a language-specified semantic extent.

\noindent\textbf{Robustness Certification.}
Robustness certification aims to certify prediction invariance under a specified set of input transformations. 
Probabilistic approaches leverage statistical inference to provide robustness guarantees with confidence~\cite{lecuyer2019certified,cohen2019certified}. Representative methods, including PixelDP~\cite{lecuyer2019certified}, randomized smoothing~\cite{cohen2019certified}, and TSS~\cite{li2021tss}, certify robustness by bounding class probabilities under randomized noise or transformations.
In contrast to probabilistic guarantees, deterministic incomplete verifiers, such as AI2~\cite{gehr2018ai2}, DeepPoly~\cite{singh2019abstract}, CROWN~\cite{zhang2018efficient}, and PRIMA~\cite{muller2022prima}, employ convex relaxations or abstract domains to provide sound but conservative guarantees.
To mitigate the precision loss of relaxations, complete verifiers such as ReluVal~\cite{wang2018formal}, $\beta$-CROWN with branch and bound~\cite{wang2021beta}, and MN-BaB~\cite{ferrari2022complete} employ iterative refinement strategies that systematically partition the search space and guarantee exactness.
Building on ExactLine~\cite{sotoudeh2019computing}, ApproxLine~\cite{mirman2021robustness} and GCERT~\cite{yuan2023precise} certify robustness over semantic variations represented in a generative model’s latent space.
%

\noindent\textbf{Input Transformations.}
A certificate is defined with respect to allowable transformations that determine the input variations it captures.
Pixel-level certificates model transformations as an $L_p$ ball to capture worst-case pixel perturbations.
For explicitly parameterized transformations, methods such as DeepG~\cite{balunovic2019certifying} and GeoRobust~\cite{wang2023towards} model transformations with closed-form geometric parameterizations, enabling certification under affine transforms.
For discrete or black-box settings, domain-specific transformations have been explored, including embedding-based transformations in DeepT~\cite{bonaert2021fast} for NLP and distributional transformations in CC-Cert~\cite{pautov2022cc}.
However, transformation models based on pixel-level perturbations or explicit geometric parameterizations cannot capture semantic-level variations that lack tractable closed-form descriptions, such as weather conditions or facial features. Consequently, methods such as ApproxLine~\cite{mirman2021robustness} and GCERT~\cite{yuan2023precise} represent semantic transformations in the latent space of generative models, which can encode more complex semantic variations for certification.
However, training such generative models requires sufficient in-domain data under the target semantic variation. In contrast, we model semantics in the multimodal embedding space of VLMs, enabling open-vocabulary semantics and supporting broad semantic coverage across domains.

\vspace{-0.6mm}
\section{Problem Statement}
\vspace{-0.9mm}
In this work, we focus on robustness certification for VLMs. VLMs are typically built on dual encoders that jointly learn visual and textual representations in a shared unit embedding space~\cite{radford2021learning,li2022blip,alayrac2022flamingo}. Let $\mathbb{S}^{d-1} \coloneq  \{e \in \mathbb{R}^d : \|e\|_2 = 1\}$ denote the unit embedding sphere in VLMs.
For an image $x$ and a prompt set $\{t_c : c\in\mathcal{C}\}$ over a label set $\mathcal{C}$, VLMs map $x$ and $t_c$ to a shared embedding space through a visual encoder $f_{\text{img}}$ and a textual encoder $f_{\text{text}}$, producing embeddings in $\mathbb{S}^{d-1}$.
%
Denoting the embeddings $z \coloneq  f_{\text{img}}(x)$ and $u_c \coloneq  f_{\text{text}}(t_c)$, the VLM classifier $f$ acts on the shared embedding space as
\begin{equation}
  \label{eq_classifier}
  f(z) \coloneq  \arg\max_{c\in\mathcal{C}} \,\langle z, u_c \rangle,
\end{equation}
where $\langle \cdot, \cdot \rangle$ denotes the Euclidean inner product, which equals cosine similarity since all embeddings lie on $\mathbb{S}^{d-1}$.

\noindent\textbf{Our Objective.} For an input $x$ with a source semantic $a$, we formalize its variation from $a$ to a target semantic $a'$ in the embedding space using a transformation $\gamma:[\varphi_{a}, \varphi_{a'}]\to\mathbb{S}^{d-1}$ parameterized by an extent $\varphi\in \mathbb{R}$. Here, $\varphi_{a}$ denotes the source extent, and $\varphi_{a'}$ specifies the target extent to be certified.
For example, in Figure~\ref{fig_certificates}, when the semantic variation is the \textit{triangular} shape of a \textit{Gyoza}, $\gamma$ formalizes the strength of the triangular attribute, and a larger $\varphi$ corresponds to a more triangular appearance of the input. Our goal is to certify whether the prediction remains invariant for all $\varphi\in[\varphi_{a},\varphi_{a'}]$ under $\gamma$.

\noindent\textbf{Semantic Shift Model.}
Consider admissible semantic transformations $\gamma(\varphi;z)$ of embedding $z$. We define the prediction for an input to be semantically robust over an extent interval $[\varphi_{a}, \varphi_{a'}]$ if it remains invariant along this range, i.e.,
\begin{equation}
\label{eq_proxy_model}
f(\gamma(\varphi;z)) = f(z), \quad \forall \varphi \in [\varphi_{a}, \varphi_{a'} ],
\end{equation}
where $\gamma(\varphi;z)$ denotes the transformed embedding at extent $\varphi$.
Our framework certifies a labeled partition of the entire extent range $[\varphi_{a}, \varphi_{a'}]$ into subintervals $[\varphi_\ell,\varphi_{\ell+1})$ such that $f(\gamma(\varphi;z))$ is constant on each subinterval. The resulting labeled partition provides a complete robustness certificate under $\gamma$, in that each subinterval is maximal with a constant predicted label.
For a fixed embedding $z$, we write $\gamma(\varphi) \coloneq  \gamma(\varphi;z)$ for brevity in what follows.


\section{Methodology}
In this section, we develop our robustness certification framework by (i) characterizing semantics in the shared VLM embedding space, (ii) constructing a semantic transformation on input embeddings, and (iii) certifying semantic robustness under semantic variations.


\subsection{Structured Semantics in the Embedding Space}
We begin by characterizing how semantics are represented in the VLM embedding space, and then show that each semantic variation can be confined to an embedding subspace.

\subsubsection{Additive Semantics in Embedding Space}
\label{sec_semantic_monotone}

A VLM maps an image $x$ and text $t$ to unit embeddings $z,u\in\mathbb{S}^{d-1}$ in a shared embedding space. For an input visual embedding, VLMs make predictions by comparing similarities to textual embeddings of class labels. Therefore, the prediction rule of VLMs induces a vector representation of semantics in the embedding space. Let $v_a\in\mathbb{S}^{d-1}$ denote the embedding vector corresponding to semantic $a$. Consistent with the similarity-based prediction rule of VLMs, we adopt the following assumption to quantify semantic strength in the embedding space.

\begin{assumption}[Similarity-based Semantic Strength]
\label{ass_semantic_strength}
For any semantic $a$ with the embedding $v_a$, the semantic strength of $a$ at any query embedding $e\in\mathbb{S}^{d-1}$ is measured by cosine similarity to $v_a$.
\end{assumption}

Under Assumption~\ref{ass_semantic_strength}, for any query embedding $e\in\mathbb{S}^{d-1}$, we define its semantic strength with respect to $a$ as
\begin{equation}
D_a(e)\coloneq \langle e, v_a\rangle  \,\,\in [-1,1].
\label{eq_semantic_strength}
\end{equation}
This definition specifies the semantics in the embedding space induced by the VLM similarity geometry.

Since semantic strength is measured by cosine similarity, semantic strengths are additive under linear combinations of embeddings. We formalize this property below.

\begin{remark}[Additive Semantic Strength]
\label{remark_1_additive}
For any semantic embedding $v_a$, if a query embedding $e$ admits a linear decomposition over semantics $\{v_{a_i}\}$, i.e., $e=\sum_i \alpha_i v_{a_i}$, then its semantic strength with respect to $a$ decomposes additively as $D_a(e)=\sum_i \alpha_i D_{a}(v_{a_i})$.
\end{remark}

This additive property of semantics shows that a semantic can be fully represented and interpreted through the linear decomposition of its embedding in VLMs. Such a property is absent in conventional neural networks whose predictions are not based on a similarity rule.

\subsubsection{Semantic Plane}

\label{sec_semantic_plane}
To specify a semantic variation, we consider a pair of semantics  $(a,a')$ with embeddings $v_a$ and $v_{a'}$ in the shared embedding space. Assume that $v_a$ and $v_{a'}$ are linearly independent, i.e., $v_{a} \notin \mathrm{span}\{v_{a’}\}$. We define the two-dimensional subspace spanned by $v_a$ and $v_{a'}$ as
\begin{equation}
  \mathcal{P}_{a,a'} \coloneq  \mathrm{span}\{v_a, v_{a'}\}
  \subset \mathbb{R}^d,
  \label{eq_semantic_plane}
\end{equation}
where we refer to $\mathcal{P}_{a,a'}$ as the semantic plane of $(a,a')$.
The plane $\mathcal{P}_{a,a'}$ isolates variations driven by $a$ and $a'$, where semantic variations are captured solely by the semantic strengths to $a$ and $a'$. Formally, we have the following remark.

\begin{remark}[Semantic Plane]
\label{prop_2_semantic_plane}
Let semantic embeddings $v_a,v_{a'}\in\mathbb{S}^{d-1}$ be linearly independent and let $\mathcal{P}_{a,a'}=\mathrm{span}\{v_a,v_{a'}\}$. If an embedding $e\in\mathbb{R}^d$ varies only in the semantic strengths to $v_a$ and $v_{a'}$, then $e$ necessarily lies in the semantic plane $\mathcal{P}_{a,a'}$.
\end{remark}

Remark~\ref{prop_2_semantic_plane} shows that if only the semantic specified by $(a, a')$ is varied, as reflected by changes in the strengths to $v_a$ and $v_{a'}$, the variation can be analyzed within the unique two-dimensional subspace $\mathcal{P}_{a,a'}$. In other words, controlling semantic strengths to $v_a$ and $v_{a'}$ confines the embedding variation to the semantic plane.

\subsection{Semantic Transformation}
In this section, we construct a transformation of the input embedding to model a semantic variation.

\subsubsection{Text Proxy for Semantics}
Under contrastive training, text embeddings serve as anchors in the embedding space, encouraging matched image embeddings to align with them while separating mismatched embeddings~\cite{radford2021learning,jia2021scaling}. This contrastive objective aligns images and text in a shared embedding space, enabling language as a natural interface for specifying semantics. In contrast to text embeddings, image embeddings summarize the full image content and thus typically entangle multiple semantic factors. This motivates the use of text prompts as a proxy to specify semantics.

To specify a semantic variation, we use a pair of text prompts $(t_{a}, t_{a'})$ to represent the source semantic $a$ and the target semantic $a'$. We denote their embeddings by $u_a \coloneq f_{\text{text}}(t_a)$ and $u_{a'} \coloneq f_{\text{text}}(t_{a'})$, which serve as text proxies for specifying $v_a$ and $v_{a'}$. We define the semantic plane induced by this pair as $\mathcal{P}_{a,a'} \coloneq  \mathrm{span}\{u_a,u_{a'}\}$. By Remark~\ref{prop_2_semantic_plane}, if an embedding varies only through its semantic strengths with respect to $u_a$ and $u_{a'}$, then the variation is confined to $\mathcal{P}_{a,a'}$. Therefore, $\mathcal{P}_{a,a'}$ captures the degrees of freedom of the variation specified by $(t_{a}, t_{a'})$. This reduces the specification of a semantic variation to selecting a prompt pair and analyzing the variation within the spanned plane $\mathcal{P}_{a,a'}$.

\subsubsection{Semantic Transformation}
Given the semantic plane $\mathcal{P}_{a,a'}$, we decompose an image embedding $z$ of input $x$ as
\begin{equation}
    z = z_{\parallel} + z_{\perp},
    \quad
    z_{\parallel} \in \mathcal{P}_{a,a'},
    \quad
    z_{\perp} \perp \mathcal{P}_{a,a'},
\end{equation}
where $z_{\parallel}$ is the orthogonal projection of $z$ onto $\mathcal{P}_{a,a'}$ and $z_{\perp}$ is the orthogonal component. Since $u_a,u_{a'}\in\mathcal{P}_{a,a'}$ and $z_{\perp}\perp \mathcal{P}_{a,a'}$, the semantic strengths with respect to $(u_a,u_{a'})$ depend only on $z_{\parallel}$, i.e. $\langle z,u_a\rangle=\langle z_{\parallel},u_a\rangle$ and $\langle z,u_{a'}\rangle=\langle z_{\parallel},u_{a'}\rangle$. In addition, when $u_a$ and $u_{a'}$ are linearly independent, each $z_{\parallel}\in\mathcal{P}_{a,a'}$ admits a representation $z_{\parallel}=\alpha u_a+\beta u_{a'}$ for $(\alpha,\beta)\in\mathbb{R}^2$.

Equivalently, for any visual embedding $z \in \mathbb{S}^{d-1}$, we have
\begin{equation}
    z = \underbrace{(\alpha\, u_a + \beta\, u_{a'})}_{\text{target semantic component in } \mathcal{P}_{a,a'}}
    \;+\;
    \underbrace{z_{\perp}}_{\text{semantics independent of } (a,a')},
\end{equation}
where $(\alpha,\beta)$ are the coordinates of the component in $\mathcal{P}_{a,a'}$, and $z_{\perp} \perp \mathcal{P}_{a,a'}$ captures the remaining semantics of $z$ that are independent of $(a,a')$.

\begin{figure}
\centerline{\includegraphics[width=0.36\textwidth]{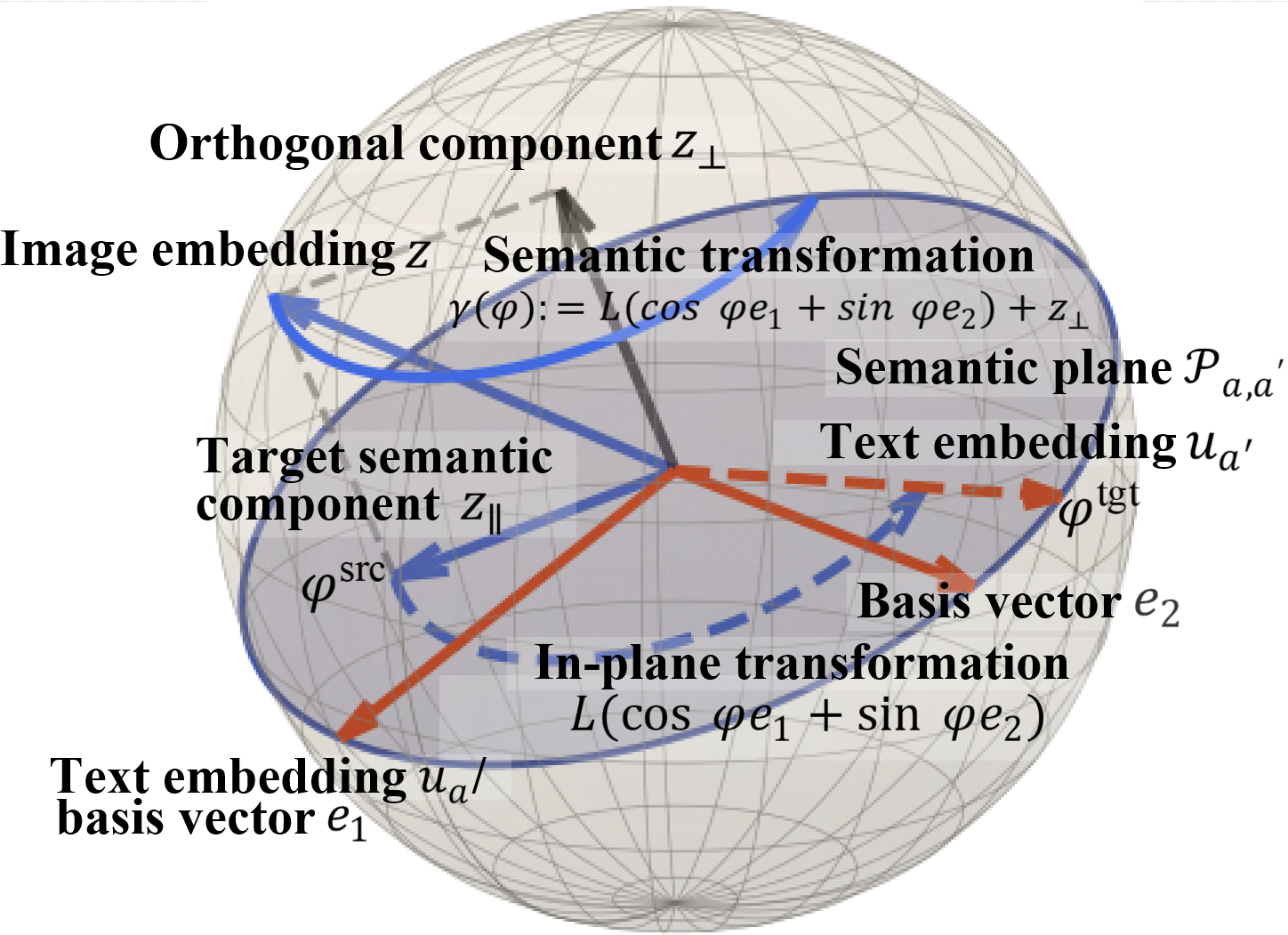}}
	\centering
	\caption{Illustration of the semantic transformation in a three-dimensional visualization of the VLM embedding space.}
	\label{fig_transformation}
\end{figure}

To parameterize the semantic transformation, we establish an orthonormal basis of $\mathcal{P}_{a,a'}$ from $(u_a, u_{a'})$ as
\begin{equation}
  e_1 \coloneq  u_a,
  \quad
  e_2 \coloneq  \frac{u_{a'} - \langle u_{a'},u_a\rangle u_a}{\|u_{a'} - \langle u_{a'},u_a\rangle u_a\|_2},
\end{equation}
where $(e_1,e_2)$ forms an orthonormal basis of $\mathcal{P}_{a,a'}$.

With the basis $(e_1,e_2)$, embeddings in $\mathcal{P}_{a,a'}$ can be parameterized by an extent $\varphi$ that controls the relative strengths with respect to $(u_a,u_{a'})$. The source semantic extent $\varphi_{a}\in(-\pi,\pi]$ in $\mathcal{P}_{a,a'}$ is defined from the orthogonal projection $z_{\parallel}$ as
\begin{equation}
  \varphi_{a} \coloneq  \operatorname{atan2}(\langle z_{\parallel},e_2\rangle,\langle z_{\parallel},e_1\rangle).
\end{equation}

We assume a target extent $\varphi_{a'}\in(-\pi,\pi]$ that specifies the target semantic of the variation. We define the transformation of semantic variation over extent $\varphi\in[\varphi_{a},\varphi_{a'}]$ as
\begin{equation}
\gamma(\varphi) \coloneq  r\big(\cos\varphi\, e_1 + \sin\varphi\, e_2 \big) + z_{\perp},
\quad
\varphi \in [\varphi_{a}, \varphi_{a'}],
\label{eq_semantic_transformation}
\end{equation}
where $r\coloneq \|z_{\parallel}\|_2$ is the magnitude of the component of $z$ within $\mathcal{P}_{a,a'}$. The extent $\varphi$ controls this component within the plane, and $z_{\perp}$ is kept unchanged to preserve the semantics independent of $(a,a')$. When $\varphi=\varphi_{a}$, we have $\gamma(\varphi_{a})=z_{\parallel}+z_{\perp}=z$. Moreover, $\gamma(\varphi)\in\mathbb{S}^{d-1}$ for all $\varphi\in[\varphi_{a},\varphi_{a'}]$ since $r(\cos\varphi\,e_1+\sin\varphi\,e_2)$ has norm $r$ for all $\varphi$ and is orthogonal to $z_{\perp}$.

Figure~\ref{fig_transformation} illustrates an example construction of the semantic transformation in the VLM embedding space. For each extent $\varphi$, $\gamma(\varphi)$ varies only the component within $\mathcal{P}_{a,a'}$, which determines the semantic strengths with respect to text embeddings $(u_a,u_{a'})$, while keeping $z_{\perp}$ unchanged to preserve the remaining semantics that are independent of $(a,a')$. As $\varphi$ varies, the transformation adjusts the relative strengths to $a$ and $a'$ within $\mathcal{P}_{a,a'}$ while preserving $z_{\perp}$.

\subsubsection{Semantic Extent}
The semantic transformation $\gamma(\varphi)$ is defined over an extent specified by a target extent $\varphi_{a'}$. However, semantic strength (e.g., how ``big'' is ``big'') does not admit an objective canonical scale, leaving $\varphi_{a'}$ ambiguous.

We therefore consider two practical specifications of $\varphi_{a'}$ under the \textit{text-specified} and \textit{image-specified} settings.
In the \textit{text-specified} setting, we anchor $\varphi_{a'}$ using the target prompt embedding $u_{a'}$ as
$\varphi_{a'}\coloneq \operatorname{atan2}(\langle u_{a'},e_2\rangle,\langle u_{a'},e_1\rangle)$. This specification treats text as a semantic reference calibrated in the VLM's similarity geometry used for prediction. In the \textit{image-specified} setting, we anchor $\varphi_{a'}$ using a reference image $x'$ exhibiting the target semantics.
Let $z' \coloneq f_{\text{img}}(x')$ and $z'_{\parallel}$ be its projection onto $\mathcal{P}_{a,a'}$. We set
$\varphi_{a'}\coloneq \operatorname{atan2}(\langle z'_{\parallel},e_2\rangle,\langle z'_{\parallel},e_1\rangle)$.
This uses direct visual evidence in the same geometry, providing a concrete specification of semantic strength when such a reference is available.
These two specifications make the abstract semantic extent explicit and reproducible by yielding a precisely specified target extent, which fixes the domain of $\varphi$ over which the subsequent certification is conditioned.


\subsection{Semantic Robustness Certification}
With the established transformation of semantic variations, we develop a certification framework to determine the prediction changes under the variation.


\subsubsection{Decision Geometry}

Consider a finite set of textual labels $\{u_c\}_{c\in\mathcal{C}}\subset\mathbb{S}^{d-1}$. Under the VLM prediction rule, each input embedding $e\in\mathbb{S}^{d-1}$ is assigned the label $f(e)$. This classification rule partitions $\mathbb{S}^{d-1}$ into Voronoi cells as
\begin{equation}
  \mathcal{V}_c \coloneq 
  \big\{e\in\mathbb{S}^{d-1}: \langle e,u_c\rangle \ge \langle e,u_{c'}\rangle
  \;\forall c'\in\mathcal{C}
  \big\},
\end{equation}
with decision boundaries induced by pairwise bisectors
\begin{equation}
  \mathcal{B}_{c,c'} \coloneq 
  \big\{e\in\mathbb{S}^{d-1} :
    \langle e,u_c - u_{c'}\rangle = 0
  \big\},
  \quad c\neq c'.
\end{equation}
Under a semantic transformation $\gamma(\varphi)$, class flips can occur only at extents $\varphi$ where $\gamma(\varphi)$ intersects some $\mathcal{B}_{c,c'}$. 

Substituting the semantic transformation $\gamma(\varphi)$ defined in Eq.~\eqref{eq_semantic_transformation} into the pairwise margin gives \begin{align}
m_{c,c'}(\varphi)
&= \big\langle r(\cos\varphi\,e_1+\sin\varphi\,e_2)+z_{\perp},\, u_c-u_{c'}\big\rangle \nonumber\\
&= A_{c,c'}\cos\varphi + B_{c,c'}\sin\varphi + C_{c,c'}, \label{eq_margin_sinusoid}
\end{align}
where $A_{c,c'}=r\langle e_1, u_c-u_{c'}\rangle$, $B_{c,c'}=r\langle e_2, u_c-u_{c'}\rangle$, and $C_{c,c'}=\langle z_{\perp}, u_c-u_{c'}\rangle$. Boundary crossings along $\gamma(\varphi)$ are therefore given in closed form by solving $m_{c,c'}(\varphi)=0$.

\subsubsection{Certifications}
Our framework certifies prediction invariance over the extent by partitioning $[\varphi_{a},\varphi_{a'}]$ into subintervals on which $f(\gamma(\varphi))$ remains unchanged. As $\varphi$ varies, label changes can occur only at extents where $m_{c,c'}(\varphi)=0$ for some pair of classes $(c,c')$. By Eq.~\eqref{eq_margin_sinusoid}, each $m_{c,c'}(\varphi)$ has a closed form. Candidate change extents are therefore obtained by solving $m_{c,c'}(\varphi)=0$ for each $(c,c')$ and retaining the solutions in $[\varphi_{a},\varphi_{a'}]$.
Collecting these solutions over all label pairs and sorting them yields an increasing sequence $\{\varphi_\ell\}_{\ell=0}^L$ with $\varphi_0=\varphi_{a}$ and $\varphi_L=\varphi_{a'}$ that contains all extents at which a label change can occur. This sequence induces a complete partition of the extent range into open intervals $(\varphi_\ell,\varphi_{\ell+1})\subset[\varphi_{a},\varphi_{a'}]$ on which the prediction $f(\gamma(\varphi))$ is constant. Denoting this constant label by $y_\ell \coloneq  f(\gamma(\varphi))$ for any $\varphi \in (\varphi_\ell,\varphi_{\ell+1})$, the certified collection of labeled intervals is
\begin{equation}
  \mathcal{S} \coloneq 
  \big\{\big((\varphi_\ell,\varphi_{\ell+1}), y_{\ell}\big): \ell = 0,\dots,L-1 \big\}.
\end{equation}
Each pair $\big((\varphi_\ell,\varphi_{\ell+1}),y_\ell\big)$ is a prediction-invariant interval under the target semantic variation, such that $f(\gamma(\varphi))=y_\ell$ for any $\varphi\in(\varphi_\ell,\varphi_{\ell+1})$.

We further report the prediction invariance probability, which aggregates robustness over the extent range. Fix a prediction label $\hat{y}=f(\gamma(\varphi_{a}))$, and draw $\varphi$ uniformly from $[\varphi_{a},\varphi_{a'}]$. The probability that the prediction remains $\hat{y}$ under the semantic transformation is
\begin{equation}
\label{eq_robust_prob}
  \mathbb{P}\big[f(\gamma(\varphi)) = \hat{y}\big] =
  \frac{1}{\varphi_{a'}-\varphi_{a}}
  \sum_{\ell : y_\ell = \hat{y}} \big(\varphi_{\ell+1} - \varphi_\ell\big),
\end{equation}
which measures the total fraction of the semantic extent range on which the prediction is invariant.

\subsubsection{Certificate Bounds under Misalignment}

Our certificates are conditioned on the similarity-based semantic strength specification in Assumption~\ref{ass_semantic_strength}.  Consequently, cross-modal mismatch can introduce uncertainty in the resulting certificates.
We model this effect via a bounded misalignment budget $\delta$ and derive conditions, parameterized by $\delta$, under which the certificates remain valid.

\begin{assumption}[Bounded Misalignment]
\label{ass_bounded_misalignment}
There exists $\delta \ge 0$ such that for any semantically matched pair with embeddings $z,u\in\mathbb{S}^{d-1}$ under the target semantic variation, we have $\|z-u\|_2\le \delta$.
\end{assumption}

For boundary crossings $m_{c,c'}(\varphi)$, we consider any embedding $e$ within a $\delta$-neighborhood of $\gamma(\varphi)$ at each extent $\varphi$:
\begin{equation}
\Gamma_\delta(\varphi)\coloneq \big\{e\in\mathbb{S}^{d-1}:\ \|e-\gamma(\varphi)\|_2\le \delta\big\}.
\end{equation}

For any $e\in\Gamma_\delta(\varphi)$, we define $\tilde m_{c,c'}(\varphi; e)\coloneq \langle e,\,u_c-u_{c'}\rangle$. For brevity, we write $\tilde m_{c,c'}(\varphi)$ for $\tilde m_{c,c'}(\varphi; e)$. We next quantify how misalignment affects pairwise margins.

\begin{lemma}[Bounded Margin Gap under Misalignment]
\label{lem:margin_dev_misalign}
For any $\varphi\in[\varphi_a,\varphi_{a'}]$, any $c\neq c'$, and any $e\in\Gamma_\delta(\varphi)$,
\begin{equation}
\big|\tilde m_{c,c'}(\varphi)-m_{c,c'}(\varphi)\big|
\le \delta\,\|u_c-u_{c'}\|_2.
\end{equation}
\end{lemma}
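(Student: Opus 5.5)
The plan is to write the gap between the two margins as a single inner product and then apply the Cauchy--Schwarz inequality.

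First I unpack the definitions. From Eq.~\eqref{eq_margin_sinusoid}, before expanding into the sinusoidal form, the nominal margin is simply
\[
m_{c,c'}(\varphi)=\langle \gamma(\varphi),\,u_c-u_{c'}\rangle .
\]
By definition, the perturbed margin is $\tilde m_{c,c'}(\varphi)=\langle e,\,u_c-u_{c'}\rangle$ for the chosen $e\in\Gamma_\delta(\varphi)$.

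Second, I subtract the two expressions. Bilinearity of the inner product gives
\[
\tilde m_{c,c'}(\varphi)-m_{c,c'}(\varphi)=\langle e-\gamma(\varphi),\,u_c-u_{c'}\rangle .
\]

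Third, I bound this quantity. Cauchy--Schwarz yields
\[
\big|\tilde m_{c,c'}(\varphi)-m_{c,c'}(\varphi)\big|\le \|e-\gamma(\varphi)\|_2\,\|u_c-u_{c'}\|_2 .
\]
Membership $e\in\Gamma_\delta(\varphi)$ gives $\|e-\gamma(\varphi)\|_2\le\delta$, and substituting this finishes the proof.

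There is no real obstacle here. The only point to be careful about is that the identity $m_{c,c'}(\varphi)=\langle\gamma(\varphi),u_c-u_{c'}\rangle$ holds for every $\varphi$ in the interval. This follows directly from the construction of $\gamma$ in Eq.~\eqref{eq_semantic_transformation}, without needing the sinusoidal coefficients $A,B,C$.

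The unit-sphere constraint on $e$ and Assumption~\ref{ass_bounded_misalignment} are not needed for this inequality. The assumption only motivates why a $\delta$-neighborhood is the relevant set to consider.

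It is also worth remarking that $\|u_c-u_{c'}\|_2=\sqrt{2-2\langle u_c,u_{c'}\rangle}\le 2$. This gives the uniform bound $2\delta$, which will presumably be used later to shrink the certified intervals via the condition $|m_{c,c'}(\varphi)|>\delta\|u_c-u_{c'}\|_2$.
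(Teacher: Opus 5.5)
Your proposal is correct and matches the paper's proof step for step. Both rewrite $\tilde m_{c,c'}(\varphi)-m_{c,c'}(\varphi)$ as $\langle e-\gamma(\varphi),\,u_c-u_{c'}\rangle$ by bilinearity, apply Cauchy--Schwarz, and then use $\|e-\gamma(\varphi)\|_2\le\delta$ from membership in $\Gamma_\delta(\varphi)$. Your side remarks are also accurate: neither the sphere constraint nor Assumption~\ref{ass_bounded_misalignment} is needed, and $\|u_c-u_{c'}\|_2\le 2$. However, the paper does not use the uniform bound $2\delta$ later; it keeps the pair-specific tolerance $\varepsilon_{c,c'}=\delta\,\|u_c-u_{c'}\|_2$.
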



Let $\hat y(\varphi)\coloneq f(\gamma(\varphi))$. We derive a stability condition ensuring that $f(e)=\hat y(\varphi)$ for all $e\in\Gamma_\delta(\varphi)$.

\begin{proposition}[Stability under Misalignment]
\label{prop:pointwise_stability_misalign}
Under Assumption~\ref{ass_bounded_misalignment}, fix any $\varphi\in[\varphi_a,\varphi_{a'}]$ and let $\hat y(\varphi)\coloneq f(\gamma(\varphi))$. If $m_{\hat y(\varphi),c'}(\varphi)>\delta\,\|u_{\hat y(\varphi)}-u_{c'}\|_2$ for all $c'\neq \hat y(\varphi)$, then $f(e)=\hat y(\varphi)$ holds for all $e\in\Gamma_\delta(\varphi)$.
\end{proposition}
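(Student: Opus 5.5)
The plan is to reduce the claim to Lemma~\ref{lem:margin_dev_misalign}, applied to each competing class. Fix $\varphi\in[\varphi_a,\varphi_{a'}]$, write $\hat y=\hat y(\varphi)$, and take an arbitrary $e\in\Gamma_\delta(\varphi)$. To show $f(e)=\hat y$ it suffices, by the definition of $f$ in Eq.~\eqref{eq_classifier}, to show that $\tilde m_{\hat y,c'}(\varphi;e)=\langle e,u_{\hat y}-u_{c'}\rangle>0$ for every $c'\neq\hat y$. Strict positivity for every competitor makes $\hat y$ the unique maximizer, so no tie-breaking issue arises.

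For a fixed $c'\neq\hat y$, I would invoke Lemma~\ref{lem:margin_dev_misalign} with the pair $(\hat y,c')$, which gives
\[
\tilde m_{\hat y,c'}(\varphi)\ \ge\ m_{\hat y,c'}(\varphi)-\delta\,\|u_{\hat y}-u_{c'}\|_2 .
\]
The hypothesis of the proposition states exactly that the right-hand side is strictly positive, so $\tilde m_{\hat y,c'}(\varphi)>0$. Since $c'$ was arbitrary, $\langle e,u_{\hat y}\rangle>\langle e,u_{c'}\rangle$ for all $c'\neq\hat y$, and therefore $f(e)=\hat y$. Since $e\in\Gamma_\delta(\varphi)$ was also arbitrary, the conclusion holds on all of $\Gamma_\delta(\varphi)$.

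If the lemma's proof is to be made self-contained, it is a single Cauchy--Schwarz step. By linearity,
\[
\tilde m_{c,c'}(\varphi)-m_{c,c'}(\varphi)=\langle e-\gamma(\varphi),\,u_c-u_{c'}\rangle ,
\]
and Cauchy--Schwarz bounds its absolute value by $\|e-\gamma(\varphi)\|_2\,\|u_c-u_{c'}\|_2\le\delta\|u_c-u_{c'}\|_2$, using the definition of $\Gamma_\delta(\varphi)$.

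There is no substantive obstacle here. The only points needing care are that the strict inequality in the hypothesis is what yields strict dominance, and hence a well-defined argmax, and that Assumption~\ref{ass_bounded_misalignment} enters only to justify restricting attention to the set $\Gamma_\delta(\varphi)$, not in the inequality itself.
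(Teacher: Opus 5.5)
Your proof is correct and follows the paper's argument exactly: you apply Lemma~\ref{lem:margin_dev_misalign} to each pair $(\hat y(\varphi),c')$, use the strict hypothesis to get $\langle e,u_{\hat y(\varphi)}-u_{c'}\rangle>0$ for every competitor, and conclude that $\hat y(\varphi)$ is the unique maximizer, so $f(e)=\hat y(\varphi)$. Your optional Cauchy--Schwarz step is also the same as the paper's proof of that lemma.
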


Using the closed-form margin $m_{c,c'}(\varphi)$, we further localize the extents where a boundary crossing may occur under misalignment by defining the uncertainty set
\begin{equation}
\label{eq:crossing_uncertainty_set}
\mathcal{U}_{c,c'}(\delta)\coloneq
\big\{\varphi\in[\varphi_a,\varphi_{a'}]:\ |m_{c,c'}(\varphi)|\le \varepsilon_{c,c'}\big\},
\end{equation}
where $\varepsilon_{c,c'}\coloneq \delta\,\|u_c-u_{c'}\|_2$ is the margin tolerance.
We next localize boundary-crossing uncertainty to $\mathcal{U}_{c,c'}(\delta)$.

\begin{lemma}[Crossing Localization]
\label{lem:crossing_localization}
For any $\varphi\notin\mathcal{U}_{c,c'}(\delta)$ and any $e\in\Gamma_\delta(\varphi)$, 
$m_{c,c'}(\varphi)$ and $\tilde m_{c,c'}(\varphi)$ have the same sign. Consequently, any extent $\varphi$ satisfying $\tilde m_{c,c'}(\varphi)=0$ must lie in $\mathcal{U}_{c,c'}(\delta)$.
\end{lemma}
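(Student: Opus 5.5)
The argument rests entirely on Lemma~\ref{lem:margin_dev_misalign}, combined with a short case split on the sign of $m_{c,c'}(\varphi)$. Fix $c\neq c'$, an extent $\varphi\notin\mathcal{U}_{c,c'}(\delta)$, and any $e\in\Gamma_\delta(\varphi)$. By the definition of the uncertainty set in Eq.~\eqref{eq:crossing_uncertainty_set}, being outside $\mathcal{U}_{c,c'}(\delta)$ while lying in $[\varphi_a,\varphi_{a'}]$ means exactly
\begin{equation*}
|m_{c,c'}(\varphi)|>\varepsilon_{c,c'}=\delta\,\|u_c-u_{c'}\|_2 .
\end{equation*}
Lemma~\ref{lem:margin_dev_misalign} gives $|\tilde m_{c,c'}(\varphi)-m_{c,c'}(\varphi)|\le \varepsilon_{c,c'}$.

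For the sign claim, I split into two cases. If $m_{c,c'}(\varphi)>\varepsilon_{c,c'}$, then
\begin{equation*}
\tilde m_{c,c'}(\varphi)\ge m_{c,c'}(\varphi)-\varepsilon_{c,c'}>0 .
\end{equation*}
If $m_{c,c'}(\varphi)<-\varepsilon_{c,c'}$, then symmetrically
\begin{equation*}
\tilde m_{c,c'}(\varphi)\le m_{c,c'}(\varphi)+\varepsilon_{c,c'}<0 .
\end{equation*}
In both cases $\tilde m_{c,c'}(\varphi)$ is strictly nonzero and has the same sign as $m_{c,c'}(\varphi)$, which is the first assertion.

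The ``consequently'' part follows by contraposition. Suppose $\varphi\in[\varphi_a,\varphi_{a'}]$ and some $e\in\Gamma_\delta(\varphi)$ satisfies $\tilde m_{c,c'}(\varphi;e)=0$. If $\varphi$ were outside $\mathcal{U}_{c,c'}(\delta)$, the previous step would force $\tilde m_{c,c'}(\varphi)\neq 0$, a contradiction. Hence $\varphi\in\mathcal{U}_{c,c'}(\delta)$.

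The only real care needed is at the edge cases, and I do not expect any step to be an obstacle. The strict inequality $|m_{c,c'}(\varphi)|>\varepsilon_{c,c'}$ comes from the non-strict ``$\le$'' in the definition of $\mathcal{U}_{c,c'}(\delta)$, and it is what makes the sign strict rather than merely non-negative. The degenerate case $\delta=0$ is consistent with this: $\Gamma_0(\varphi)=\{\gamma(\varphi)\}$, and $\mathcal{U}_{c,c'}(0)$ reduces to the exact zero set of $m_{c,c'}$.
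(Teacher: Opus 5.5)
Your proof is correct and matches the paper's own argument: you apply Lemma~\ref{lem:margin_dev_misalign} to get $|\tilde m_{c,c'}(\varphi)-m_{c,c'}(\varphi)|\le\varepsilon_{c,c'}$, split on the sign of $m_{c,c'}(\varphi)$ using the strict inequality $|m_{c,c'}(\varphi)|>\varepsilon_{c,c'}$, and obtain the second claim by contradiction. Your remarks on strictness and on the $\delta=0$ case are sound, though the proof does not need them.
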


Using the cosine form of $m_{c,c'}(\varphi)$, $\mathcal{U}_{c,c'}(\delta)$ is given by
\begin{equation}
\big|R_{c,c'}\cos(\varphi-\psi_{c,c'})+C_{c,c'}\big|\le \varepsilon_{c,c'} ,
\end{equation}
where the amplitude $R_{c,c'}\coloneq (A_{c,c'}^2\!+\!B_{c,c'}^2)^{1/2}$ and the phase $\psi_{c,c'}\coloneq \operatorname{atan2}(B_{c,c'},A_{c,c'})$.
The boundaries solve $R_{c,c'}\cos(\varphi-\psi_{c,c'})=-C_{c,c'}\!\pm\!\varepsilon_{c,c'}$ for $\varphi\in[\varphi_a,\varphi_{a'}]$.

Thus, for a given misalignment budget $\delta$, we identify the extents that are guaranteed prediction-invariant within $\Gamma_\delta(\varphi)$ and localize the remaining uncertainty to $\mathcal{U}_{c,c'}(\delta)$.

\section{Experiments}
\label{subsec:semantic-planes}

In this section, we evaluate our framework on both controlled synthetic and real-world semantic variations. Experiments are conducted on publicly available CLIP VLMs~\cite{radford2021learning}. Before presenting the results, we describe the experimental setup below.

\noindent\textbf{Prompts for Semantics.}
We use text prompts to specify the semantic variations. The prompt pair $(t_{a}, t_{a'})$ specifies which semantic factors are allowed to vary under the transformation. If the prompts differ beyond the intended attribute, then $\mathcal{P}_{a,a'}$ may capture additional semantic factors, and moving within $\mathcal{P}_{a,a'}$ can leak into unintended semantic variations. In our experiments, we control this effect by using prompt pairs that share the same template and content words, differing only in the attribute token. For each attribute type, we construct a comprehensive list of attribute descriptors. Multiple attribute types are considered to evaluate the generality of our framework. Figure~\ref{fig_prompts} provides examples of the descriptors used for each attribute type.

\begin{figure*}[t]
\centerline{\includegraphics[width=0.99\textwidth]{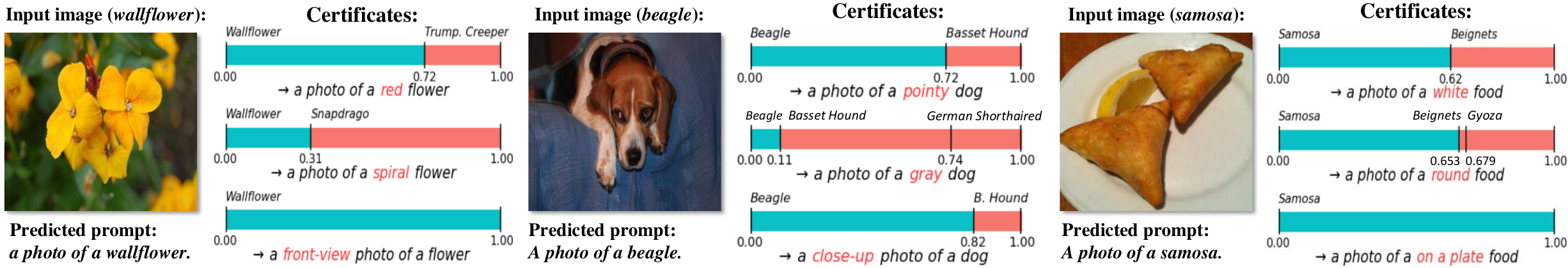}}
	\centering
    \vspace{-1mm}
	\caption{Illustration of our VLM robustness certificates. Prediction-invariant intervals are completely certified over a normalized semantic extent $\varphi\in[0,1]$ for diverse semantic variations across domains. Text prompts serve as proxies for specifying different target semantics.}
    \vspace{-2.0mm}
	\label{fig_vis_certificates}
\end{figure*}
\begin{figure}[t]
\centerline{\includegraphics[width=0.48\textwidth]{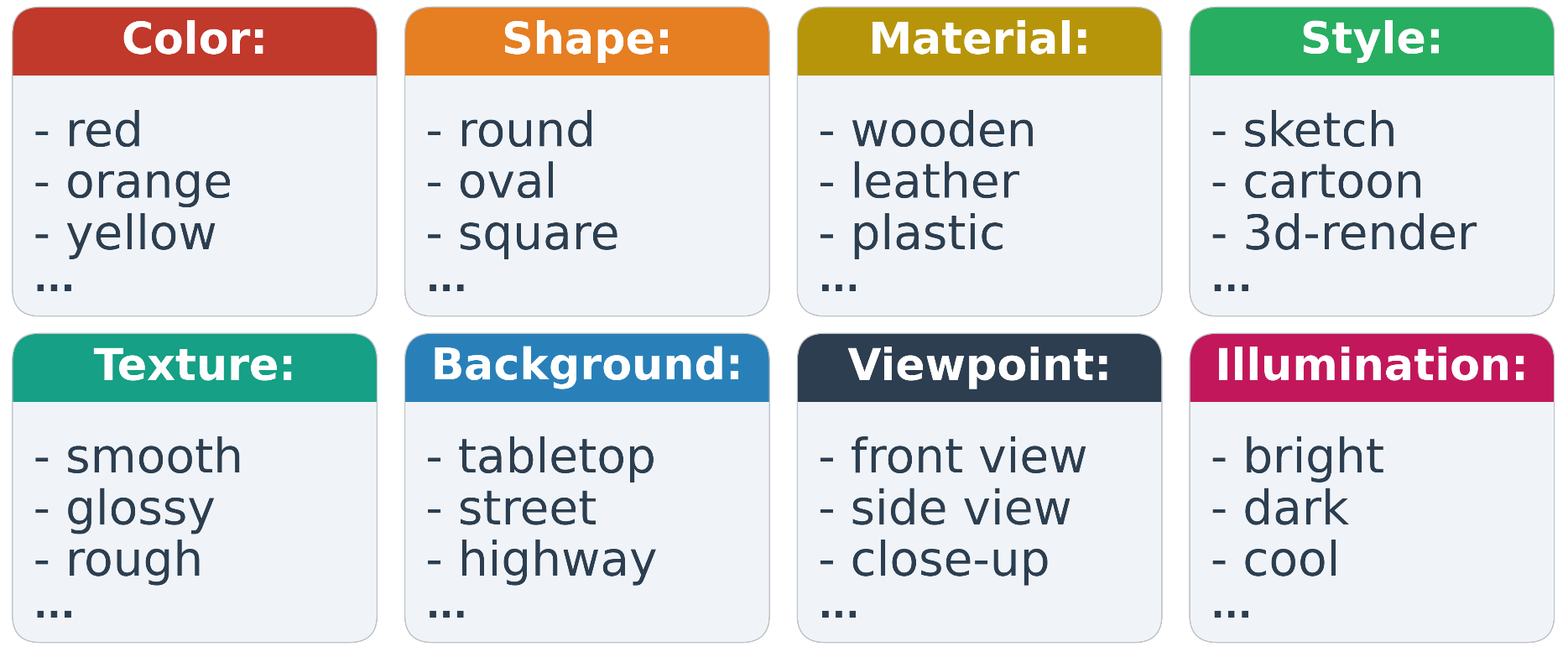}}
	\centering
    \caption{Illustration of descriptors grouped by attribute type. Using a fixed prompt template (e.g., ``a photo of a \{attribute\} class''), we vary only the descriptor to model semantic variations.}
    \vspace{-2.6mm}
	\label{fig_prompts}
\end{figure}

\noindent\textbf{Baselines.}
We evaluate against ExactLine, a complete certification method that certifies predictions over the linear interpolation path between two endpoint images~\cite{sotoudeh2019computing}. ExactLine identifies all decision boundary crossing points along the interpolation path, yielding a complete partition of the range into prediction-invariant intervals. Our work targets semantic transformations in VLMs without requiring auxiliary inputs. Under this setting, ExactLine is the only existing framework we are aware of that provides complete certification without additional inputs or supervision~\cite{mirman2021robustness,yuan2023precise}, and therefore serves as our primary baseline.

\noindent\textbf{Visual Reference Transformation and Metric.}
Ground-truth semantic variations are difficult to define and annotate, which makes it challenging to directly evaluate whether a specified transformation follows the intended change. To obtain a visual reference, we assume access to an image sequence $\{x_k\}_{k=0}^{K}$ that exhibits the target semantic variation from $a$ to $a'$. Let $z_k\coloneq f_{\text{img}}(x_k)\in\mathbb{S}^{d-1}$ be the corresponding normalized embeddings, with $z_0$ as the starting embedding.
We fit a visual reference transformation as a great circle arc on the unit sphere, $\gamma^{\mathrm{ref}}:[\varphi_a,\varphi_{a'}]\to\mathbb{S}^{d-1}$, by least squares over the samples, minimizing $\sum_{k=0}^{K}\|\gamma^{\mathrm{ref}}(\varphi_k)-z_k\|_2^2$ subject to $\gamma^{\mathrm{ref}}(\varphi_a)=z_0$.
We then quantify alignment between the specified semantic transformation $\gamma(\varphi)$ and the visual reference transformation $\gamma^{\text{ref}}(\varphi)$ using the discrepancy in their prediction invariance probabilities as defined in Eq.~\eqref{eq_robust_prob}. Concretely, we fix a reference label $\hat{y}=f(\gamma(\varphi_{a}))$ and compute the prediction invariance probability for each transformation. For each image $x_i$, let $\gamma^{\text{ref}}_i(\varphi)$ be the fitted visual reference transformation and $\gamma_i(\varphi)$ be the specified semantic transformation, and we report the mean absolute discrepancy over all $N$ images as
$\frac{1}{N}\sum_{i=1}^{N}\Big| \mathbb{P}\!\big[f(\gamma^{\text{ref}}_i(\varphi))=\hat{y}\big]
-\mathbb{P}\!\big[f(\gamma_i(\varphi))=\hat{y}\big]
\Big|$.

\subsection{Qualitative Evaluation of Semantic Variations}
\label{sec:cert_examples}
Since semantic variation lacks a standard quantitative ground truth, qualitative analysis plays a necessary role in validating that the evaluation reflects the intended semantic change. Before presenting quantitative results, we therefore provide a qualitative study. In Figure~\ref{fig_vis_certificates}, we model different semantic variations by using text prompts and present certificates of prediction-invariant intervals over the normalized extent. The figure shows that our method can track diverse semantic variations and can identify plausible flipped classes under transformations. For instance, under the target semantic \textit{round} for the input \textit{Samosa}, in contrast to the example in Figure~\ref{fig_certificates}, the prediction flips back to \textit{Gyoza}. This suggests qualitative alignment between our semantic transformation and real-world semantic variations.


%
\begin{figure}
\centerline{\includegraphics[width=0.48\textwidth]{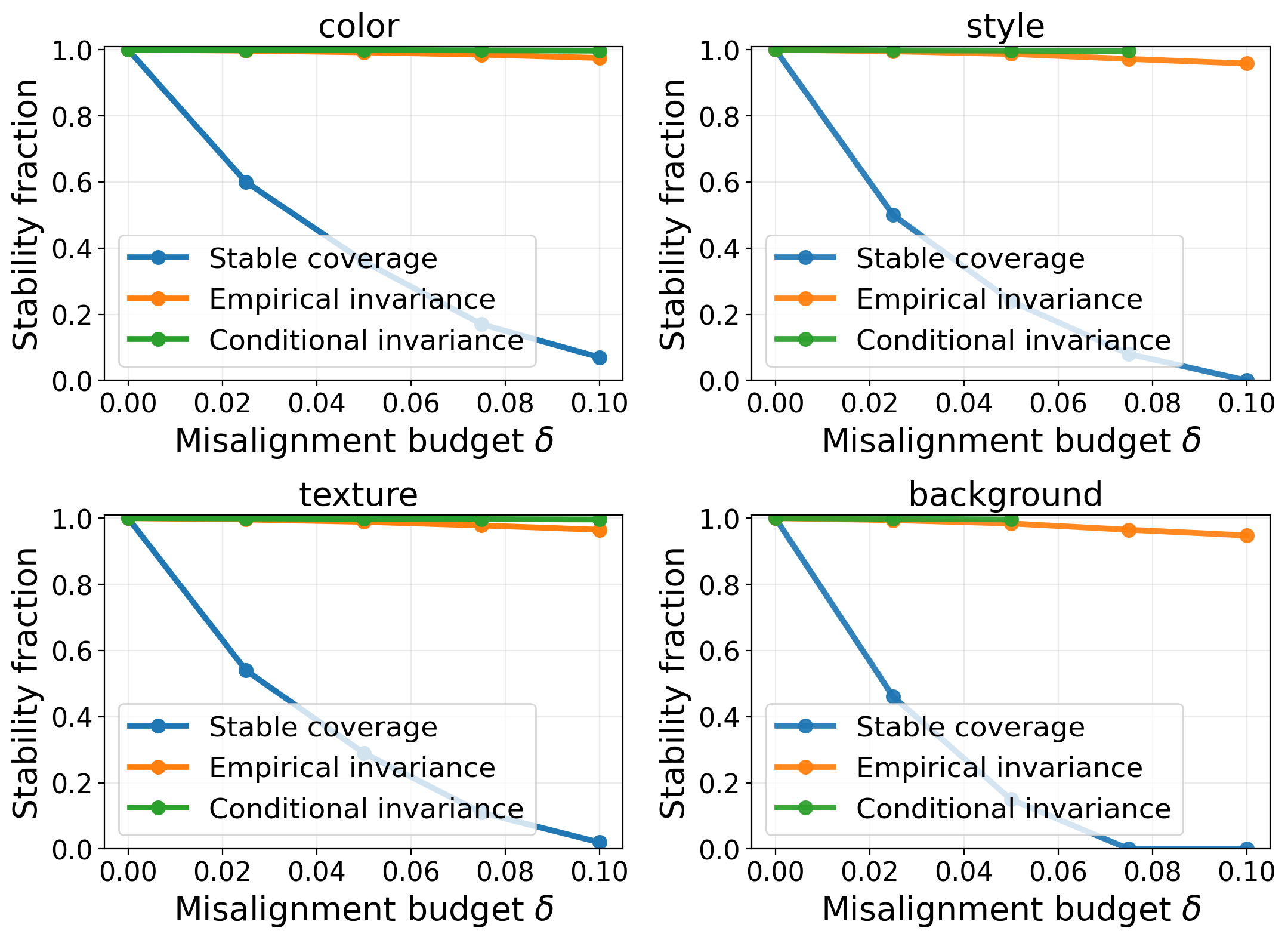}}
	\centering
\vspace{-0.5mm}
\caption{Evaluation of certificate bounds on an ImageNet subset. We sweep the misalignment budget $\delta$ for semantic variations driven by color, style, texture, and background. Stable coverage is the fraction of extents that are certified prediction-invariant. Empirical invariance is the observed fraction of invariant extents under sampled perturbations within the $\delta$-neighborhood. Conditional invariance reports empirical invariance restricted to certified extents.
}
\label{fig_misalign_bounds}
\vspace{-2.6mm}
\end{figure}

\begin{table*}[t]
    \tabstyle{6pt}
    \setlength{\tabcolsep}{3pt}
    \caption{Comparison of mean absolute discrepancy among ExactLine, our text-specified transformation (T-Spec), and our image-specified transformation (I-Spec) under synthetic semantic variations, covering both in-domain (ID) attributes and out-of-domain (OOD) attributes.}
    \vspace{-1mm}
    \label{tab_results_gen_image}
    \hspace{-3mm}
    {\setlength{\tabcolsep}{2pt}
    \begin{subtable}[t]{.32\textwidth}
    \centering
    \caption{Oxford Pets.}
    \vspace{-1mm}
    \begin{tabular}{l cc|c}
    \toprule
    & \multicolumn{2}{c|}{ID} & \multicolumn{1}{c}{OOD} \\
    & Color & Background & \ \ Texture\ \ \\
    \midrule
    ExactLine  & 12.6\% & 10.2\% & 18.1\% \\
    \rowcolor{tabhighlight}
    T-Spec & 6.9\% & 7.3\% & 7.4\% \\
    \rowcolor{tabhighlight}
    I-Spec & 4.1\% & 3.2\% & 6.7\% \\
    \bottomrule
    \end{tabular}
    \end{subtable}}
    ~
    \vspace{1em}
    {\setlength{\tabcolsep}{2pt}
    \begin{subtable}[t]{.32\textwidth}
    \centering
    \caption{Flowers102.}
    \vspace{-1mm}
    \begin{tabular}{l cc|cc}
    \toprule
    & \multicolumn{2}{c|}{ID} & \multicolumn{2}{c}{OOD} \\
    & Color & View & Color & Shape \\
    \midrule
    ExactLine  & 8.4\% & 12.6\% & 10.1\% & 19.3\%  \\
    \rowcolor{tabhighlight}
    T-Spec & 6.5\% & 8.2\% & 9.4\% & 8.7\%  \\
    \rowcolor{tabhighlight}
    I-Spec & 2.3\% & 4.0\% & 7.5\% & 6.6\% \\
    \bottomrule
    \end{tabular}
    \end{subtable}}
    ~
    \vspace{1em}
    {\setlength{\tabcolsep}{2pt}
    \begin{subtable}[t]{.32\textwidth}
    \centering
    \caption{Food101.}
    \vspace{-1mm}
    \begin{tabular}{l cc|c}
    \toprule
    & \multicolumn{2}{c|}{ID} & \multicolumn{1}{c}{OOD} \\
    & View & Background & \ \ Color\ \ \\
    \midrule
    ExactLine & 14.6\% & 10.7\% & 9.3\%   \\
    \rowcolor{tabhighlight}
    T-Spec & 0.0\% & 6.2\% & 9.6\%  \\
    \rowcolor{tabhighlight}
    I-Spec & 0.0\% & 3.9\% & 6.8\% \\
    \bottomrule
    \end{tabular}
    \end{subtable}}
\vspace{-6mm}
\end{table*}

\subsection{Evaluation of Certificate Bounds}

We evaluate the certificate bounds under semantic misalignment during certification. Since estimating real-world misalignment would require additional annotation and calibration beyond the scope of this work, we evaluate user-specified misalignment budgets. For each semantic variation, we sweep the misalignment budget $\delta$ and sample perturbations within the corresponding $\delta$-neighborhood at each extent, then measure whether the prediction matches the nominal prediction along the transformation and aggregate over the extent range.
Figure~\ref{fig_misalign_bounds} reports stable coverage, empirical invariance, and conditional invariance. Across  attributes, conditional invariance remains close to one as $\delta$ increases, supporting the validity of the bound. As expected, stable coverage decreases with larger $\delta$ because certifying invariance under stronger misalignment requires larger pairwise margins. Empirical invariance stays high over the same range, suggesting that the bound is conservative yet reliable.

\begin{figure}[t]
\centerline{\includegraphics[width=0.48\textwidth]{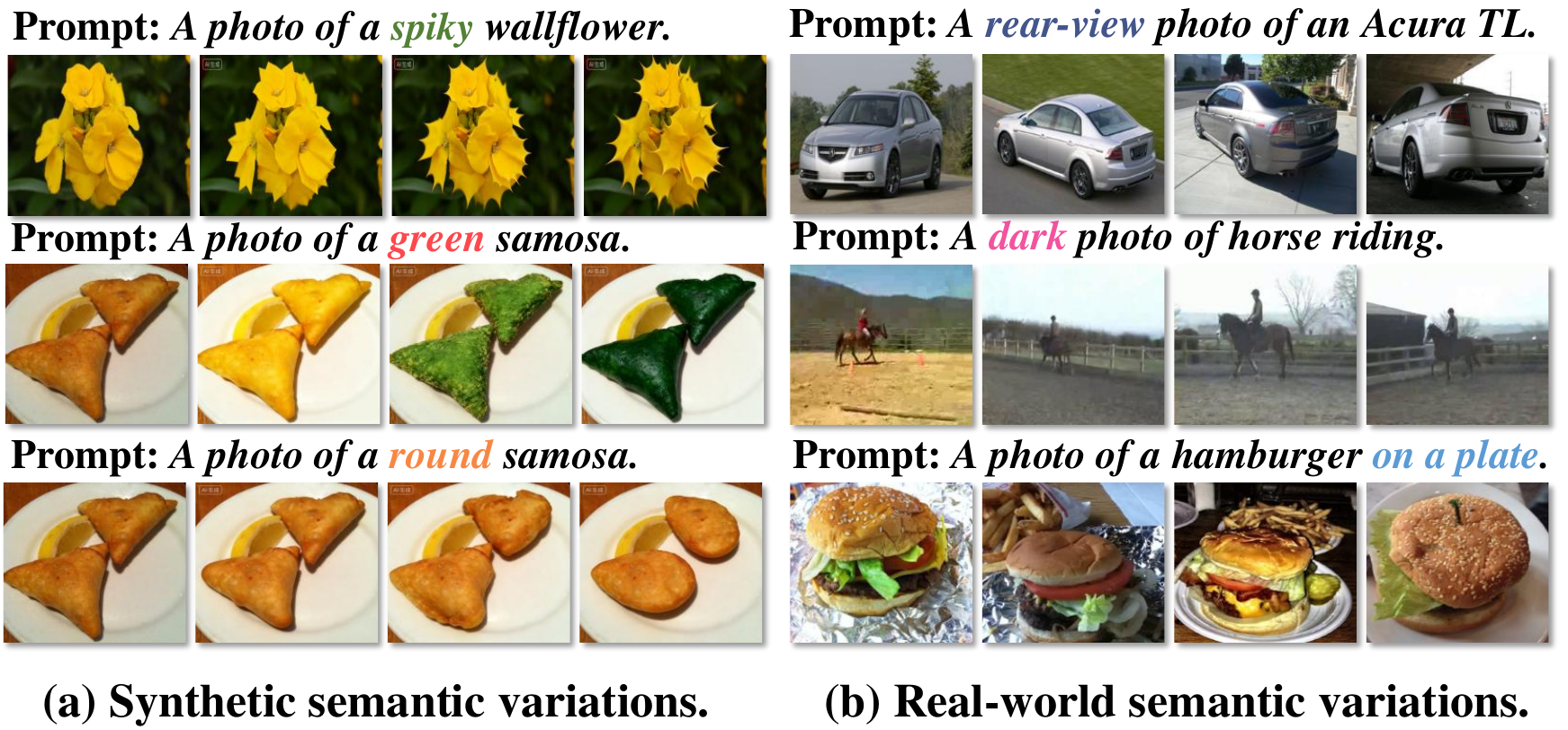}}
	\centering
    \vspace{-0.5mm}
	\caption{Images of synthetic and real-world semantic variations.}
    \vspace{-1.5mm}
	\label{fig_semantic_variation}
\end{figure}

\subsection{Evaluation on Synthetic Semantic Variations}
To evaluate certificates under controlled semantic variations, we use multimodal LLMs~\cite{achiam2023gpt,guo2025seed1} to generate a sequence of images exhibiting gradual semantic variations for three image recognition datasets, OxfordPets~\cite{parkhi2012cats}, Flowers102~\cite{nilsback2008automated}, and Food101~\cite{bossard2014food}. For each dataset, we sample images across multiple categories as source images for generation. Figure~\ref{fig_semantic_variation}(a) shows example image sequences generated under different semantic variations. We consider both in-domain (ID) variations that occur in the dataset and out-of-distribution (OOD) variations that do not occur in the dataset.
For ExactLine and our image-based method, we set the last image from the collected images as the reference image to specify the target semantic extent.
Table~\ref{tab_results_gen_image} reports the mean absolute discrepancy over the semantic extent for ExactLine and our framework. The results show that our method consistently yields lower discrepancy, indicating that the proposed transformation aligns better with the intended semantic variation.
ExactLine models variation by linear interpolation between two endpoint images, which can approximate simple semantic variations such as color. However, for more complex variations, such as viewpoint or shape, it often introduces unintended changes that are not part of the target semantics. In contrast, our transformation leads to more stable alignment across both in-domain and out-of-distribution variations.

\subsection{Evaluation on Real-World Semantic Variations}
To evaluate certificates on real-world semantic variations, we use eight image recognition datasets, including Caltech101~\cite{fei2004learning}, OxfordPets~\cite{parkhi2012cats}, StanfordCars~\cite{krause20133d}, Flowers102~\cite{nilsback2008automated}, Food101~\cite{bossard2014food}, UCF101~\cite{soomro2012ucf101}, DTD~\cite{cimpoi2014describing} and FGVCAircraft~\cite{maji2013fine}.
For each dataset, we identify the main attribute types that naturally vary within the dataset. We then construct a real-world image sequence of semantic variations, as illustrated in Figure~\ref{fig_semantic_variation}(b). Compared to generated sequences, real-world sequences exhibit greater uncontrolled variation and may not isolate a target variation.
To mitigate this problem, we use the VLM to rank images by their similarity to the corresponding prompt within a semantic family, thereby ordering the sequence primarily by the intended semantic. This VLM-induced ordering provides a practical visual reference for evaluating alignment.
Table~\ref{tab_real_semantics} reports the mean absolute discrepancy over the semantic extent, showing that our method consistently outperforms ExactLine across datasets and semantics, indicating alignment with real-world semantic variation.

\begin{table*}[t]
    \tabstyle{6pt}
    \caption{Comparison of mean absolute discrepancy among ExactLine, our text-specified transformation (T-Spec), and our image-specified transformation (I-Spec) under real-world semantic variations (lower is better).}
    \vspace{-1mm}
    \label{tab_real_semantics}

    \begin{subtable}[t]{.44\textwidth}
    \centering
    \caption{DTD.}
    \begin{tabular}{l c|c|c|c}
    \toprule
    & Color & Size & Texture & Illumination (ILL)\\
    \midrule
    ExactLine & 29.9\% & 18.5\% & 8.7\% & 11.9\% \\
    \rowcolor{tabhighlight}
    T-Spec & 21.3\% & 10.1\% & 2.4\% & 0.0\% \\
    \rowcolor{tabhighlight}
    I-Spec & 19.3\% & 8.1\% & 3.9\% & 0.0\% \\
    \bottomrule
    \end{tabular}
    \end{subtable}
    ~
    \hspace*{2.0em}
    \begin{subtable}[t]{.45\textwidth}
    \centering
    \caption{FGVCAircraft.}
    \begin{tabular}{l c|c}
    \toprule
    & Viewpoint (VP) & Background (BG) \\
    \midrule
    ExactLine & 31.9\% & 13.0\% \\
    \rowcolor{tabhighlight}
    T-Spec & 26.8\% & 14.9\% \\
    \rowcolor{tabhighlight}
    I-Spec & 27.4\% & 11.0\%  \\
    \bottomrule
    \end{tabular}
    \end{subtable}

    \vspace{0.5em}
    \begin{subtable}[t]{.32\textwidth}
    \centering
    \caption{Caltech101.}
    \begin{tabular}{l c|c|c}
    \toprule
    & Color & VP & Style \\
    \midrule
    ExactLine & 2.7\% & 0.0\% & 10.5\% \\
    \rowcolor{tabhighlight}
    T-Spec & 0.0\% & 0.0\% & 11.9\% \\
    \rowcolor{tabhighlight}
    I-Spec & 0.0\% & 0.0\% & 7.8\% \\
    \bottomrule
    \end{tabular}
    \end{subtable}
    ~
    \vspace{1em}
    \begin{subtable}[t]{.33\textwidth}
    \centering
    \caption{StanfordCars.}
    \begin{tabular}{l c|c|c}
    \toprule
    & Color & VP & BG \\
    \midrule
    ExactLine & 32.8\% & 31.7\% & 21.4\% \\
    \rowcolor{tabhighlight}
    T-Spec & 11.3\% & 8.2\% & 3.0\% \\
    \rowcolor{tabhighlight}
    I-Spec & 6.2\% & 5.8\% & 4.5\% \\
    \bottomrule
    \end{tabular}
    \end{subtable}
    \vspace{2em}
    ~
    \vspace{2em}
    \begin{subtable}[t]{.31\textwidth}
    \centering
    \caption{Flowers102.}
    \begin{tabular}{l c|c|c}
    \toprule
    & Color & Shape & VP \\
    \midrule
    ExactLine & 14.9\% & 23.5\% & 0.0\% \\
    \rowcolor{tabhighlight}
    T-Spec & 11.0\% & 10.3\% & 0.0\% \\
    \rowcolor{tabhighlight}
    I-Spec & 6.2\% & 8.7\% & 0.0\% \\
    \bottomrule
    \end{tabular}
    \end{subtable}

    \vspace{-14mm}

    \begin{subtable}[t]{.325\textwidth}
    \centering
    \caption{OxfordPets.}
    \begin{tabular}{l c|c|c}
    \toprule
    & Texture & VP & BG \\
    \midrule
    ExactLine & 5.2\% & 10.3\% & 11.5\% \\
    \rowcolor{tabhighlight}
    T-Spec & 2.9\% & 0.0\% & 3.2\% \\
    \rowcolor{tabhighlight}
    I-Spec & 0.5\% & 0.0\% & 0.2\%  \\
    \bottomrule
    \end{tabular}
    \end{subtable}
    ~
    \vspace{1em}
    \begin{subtable}[t]{.31\textwidth}
    \centering
    \caption{Food101.}
    \begin{tabular}{l c|c|c}
    \toprule
    & Shape & BG & ILL \\
    \midrule
    ExactLine & 14.7\% & 11.2\% & 7.4\% \\
    \rowcolor{tabhighlight}
    T-Spec & 0.0\% & 0.0\% & 6.8\% \\
    \rowcolor{tabhighlight}
    I-Spec & 0.0\% & 0.0\% & 5.1\% \\
    \bottomrule
    \end{tabular}
    \end{subtable}
    ~
    \vspace{1em}
    \begin{subtable}[t]{.34\textwidth}
    \centering
    \caption{UCF101.}
    \begin{tabular}{l c|c|c}
    \toprule
    & VP & BG & ILL \\
    \midrule
    ExactLine & 30.1\% & 13.6\% & 7.9\% \\
    \rowcolor{tabhighlight}
    T-Spec & 14.1\% & 0.0\% & 7.3\% \\
    \rowcolor{tabhighlight}
    I-Spec & 4.0\% & 0.0\% & 2.9\% \\
    \bottomrule
    \end{tabular}
    \end{subtable}
    \hspace{-1.2em}
    \vspace{-7mm}
\end{table*}

\section{Discussion}
Beyond the formal certification guarantee, our framework offers practical benefits for analyzing, comparing, and improving VLM robustness. Certifying prediction-invariant intervals along the semantic extent characterizes where the prediction remains stable. This makes the certificate not only a robustness statement, but also a diagnostic description of the model's semantic decision geometry. Since reference images can be mapped onto the same semantic extent, the certified intervals can be related to concrete visual examples rather than only to abstract embedding coordinates. The closed-form margin further indicates proximity to semantic decision boundaries and quantifies how the certified intervals change under bounded cross-modal misalignment. These properties support practical uses in robustness auditing, prompt learning, and model adaptation. The certified intervals and class transitions reveal which target semantic variations induce prediction changes and support comparisons across datasets and VLMs~\cite{fang2022data,yang2026attribution,jiang2025uncertainty,yu2026trustenergy}. For prompt engineering, interval length can serve as a certificate-aware criterion that favors stable predictions over target semantic variations, complementing performance-driven objectives~\cite{zhou2022learning,zhou2022conditional,jiang2024efficient}. Since the certified object is the shared image-text scoring mechanism, the same analysis can also inform downstream pipelines that reuse this score, including image-text retrieval, detection, and segmentation~\cite{du2022learning,zou2023segment}.

The scope of the certificate is subject to two qualifications. First, the framework depends on the quality of the language proxy and the alignment between visual and textual embeddings. Our bounded misalignment analysis makes this dependence explicit and localizes uncertainty due to cross-modal mismatch, but the resulting bound can become conservative when the modality gap is large. Recent work has begun to analyze and reduce such gaps in VLM representation spaces~\cite{liang2022mind,bhalla2024interpreting}, while integrating modality-gap correction into semantic certification remains an important future direction. Second, validating semantic transformations remains challenging because semantic variation is difficult to isolate in the input space. The real-world sequences used in our experiments provide practical visual references, but can still contain non-target semantic changes. The synthetic sequences offer more control over the intended progression, but can introduce artifacts from the source identity. In addition, zero discrepancy values can occur when the visual reference path itself does not induce a prediction change. Such cases still test whether a transformation avoids spurious class transitions, but they provide limited evidence about boundary localization. These challenges motivate semantic-variation benchmarks with stronger control over target attributes and evaluation protocols that distinguish transformation alignment from the absence of prediction changes.

\section*{Acknowledgments}

The authors would like to thank Neil Marchant for his valuable input during the course of this research project. This work was supported by the Australian Defence Science and Technology (DST) Group via the Advanced Strategic Capabilities Accelerator (ASCA) program.

\section*{Impact Statement}
This work develops robustness certificates for vision-language models under target semantic variations. By characterizing how predictions evolve along a parameterized semantic extent and identifying prediction-invariant intervals, the framework supports transparent evaluation and monitoring of semantic drift, and helps localize where predictions are reliably stable under the specified semantic change. We certify prediction invariance under semantic variations specified by text prompts in a similarity space, while a bounded misalignment budget models the remaining cross-modal mismatch. Accordingly, the results are intended to complement application-specific testing and should not be interpreted as guarantees under arbitrary real-world transformations or for safety-critical deployment.


\bibliography{example_paper}
\bibliographystyle{icml2026}

\newpage
\appendix
\onecolumn

\section{Proof}
\label{appx_proof}
In this section, we provide the proof of Lemmas~\ref{lem:margin_dev_misalign}-\ref{lem:crossing_localization}. We begin with the proof of Lemma~\ref{lem:margin_dev_misalign}.

\begin{proof}[Proof of Lemma~\ref{lem:margin_dev_misalign}]
Fix any $\varphi\in[\varphi_a,\varphi_{a'}]$, any $c\neq c'$, and any $e\in\Gamma_\delta(\varphi)$.
Recall that the nominal pairwise margin along the path $\gamma$ is defined as
\[
m_{c,c'}(\varphi)\coloneq \big\langle \gamma(\varphi),\,u_c-u_{c'}\big\rangle,
\]
and the perturbed margin at extent $\varphi$ under an embedding $e$ is
\[
\tilde m_{c,c'}(\varphi; e)\coloneq \big\langle e,\,u_c-u_{c'}\big\rangle.
\]
Taking the difference and using bilinearity of the inner product yields
\[
\tilde m_{c,c'}(\varphi; e)-m_{c,c'}(\varphi)
=
\big\langle e-\gamma(\varphi),\,u_c-u_{c'}\big\rangle.
\]
Applying the Cauchy--Schwarz inequality gives
\[
\big|\tilde m_{c,c'}(\varphi; e)-m_{c,c'}(\varphi)\big|
\le
\|e-\gamma(\varphi)\|_2\,\|u_c-u_{c'}\|_2.
\]
Finally, since $e\in\Gamma_\delta(\varphi)$ and
\[
\Gamma_\delta(\varphi)
=
\big\{e\in\mathbb{S}^{d-1}:\ \|e-\gamma(\varphi)\|_2\le \delta\big\},
\]
we have $\|e-\gamma(\varphi)\|_2\le\delta$. Substituting this bound into the above inequality yields
\[
\big|\tilde m_{c,c'}(\varphi; e)-m_{c,c'}(\varphi)\big|
\le
\delta\,\|u_c-u_{c'}\|_2,
\]
which completes the proof.
\end{proof}

We then prove Proposition~\ref{prop:pointwise_stability_misalign}.

\begin{proof}[Proof of Proposition~\ref{prop:pointwise_stability_misalign}]
Under Assumption~\ref{ass_bounded_misalignment}, fix any $\varphi\in[\varphi_a,\varphi_{a'}]$ and let
$\hat y(\varphi)\coloneq f(\gamma(\varphi))$.
Take an arbitrary $e\in\Gamma_\delta(\varphi)$.
For any $c'\neq \hat y(\varphi)$, define the perturbed margin
\[
\tilde m_{\hat y(\varphi),c'}(\varphi; e)\coloneq \big\langle e,\,u_{\hat y(\varphi)}-u_{c'}\big\rangle.
\]
By Lemma~\ref{lem:margin_dev_misalign} applied to the pair $(\hat y(\varphi),c')$,
\[
\big|\tilde m_{\hat y(\varphi),c'}(\varphi; e)-m_{\hat y(\varphi),c'}(\varphi)\big|
\le
\delta\,\|u_{\hat y(\varphi)}-u_{c'}\|_2.
\]
Hence,
\[
\tilde m_{\hat y(\varphi),c'}(\varphi; e)
\ge
m_{\hat y(\varphi),c'}(\varphi)-\delta\,\|u_{\hat y(\varphi)}-u_{c'}\|_2.
\]
By the condition of the proposition, the right-hand side is strictly positive for every $c'\neq \hat y(\varphi)$.
Therefore,
\[
\big\langle e,\,u_{\hat y(\varphi)}-u_{c'}\big\rangle>0
\quad \forall c'\neq \hat y(\varphi),
\]
equivalently,
\[
\big\langle e,\,u_{\hat y(\varphi)}\big\rangle>\big\langle e,\,u_{c'}\big\rangle
\quad \forall c'\neq \hat y(\varphi).
\]
Thus $\hat y(\varphi)$ uniquely maximizes $\langle e,u_c\rangle$ over $c\in\mathcal{C}$, and by the definition
$f(e)\coloneq \arg\max_{c\in\mathcal{C}}\langle e,u_c\rangle$ in~\eqref{eq_classifier}, we conclude $f(e)=\hat y(\varphi)$.
Since $e\in\Gamma_\delta(\varphi)$ was arbitrary, the claim holds for all $e\in\Gamma_\delta(\varphi)$.
\end{proof}

Below, we provide the proof of Lemma~\ref{lem:crossing_localization}.

\begin{proof}[Proof of Lemma~\ref{lem:crossing_localization}]
Fix any $c,c'\in\mathcal{C}$ with $c\neq c'$. Take any $\varphi\in[\varphi_a,\varphi_{a'}]$ and any $e\in\Gamma_\delta(\varphi)$. By definition,
\[
m_{c,c'}(\varphi)=\big\langle \gamma(\varphi),\,u_c-u_{c'}\big\rangle,
\qquad
\tilde m_{c,c'}(\varphi; e)=\big\langle e,\,u_c-u_{c'}\big\rangle.
\]
Applying Lemma~\ref{lem:margin_dev_misalign} yields
\[
\big|\tilde m_{c,c'}(\varphi; e)-m_{c,c'}(\varphi)\big|
\le
\delta\,\|u_c-u_{c'}\|_2
\;=\;
\varepsilon_{c,c'}.
\]

Now assume $\varphi\notin\mathcal{U}_{c,c'}(\delta)$.
By the definition of $\mathcal{U}_{c,c'}(\delta)$ in~\eqref{eq:crossing_uncertainty_set}, we have
$|m_{c,c'}(\varphi)|>\varepsilon_{c,c'}$. We show that $m_{c,c'}(\varphi)$ and $\tilde m_{c,c'}(\varphi; e)$ must have the same sign.

If $m_{c,c'}(\varphi)>\varepsilon_{c,c'}$, then
\[
\tilde m_{c,c'}(\varphi; e) \ge
m_{c,c'}(\varphi)-\big|\tilde m_{c,c'}(\varphi; e)-m_{c,c'}(\varphi)\big|
\ge
m_{c,c'}(\varphi)-\varepsilon_{c,c'}>0.
\]
If $m_{c,c'}(\varphi)<-\varepsilon_{c,c'}$, then
\[
\tilde m_{c,c'}(\varphi; e)
\le
m_{c,c'}(\varphi)+\big|\tilde m_{c,c'}(\varphi; e)-m_{c,c'}(\varphi)\big|
\le
m_{c,c'}(\varphi)+\varepsilon_{c,c'}<0.
\]
In either case, $\tilde m_{c,c'}(\varphi; e)$ has the same sign as $m_{c,c'}(\varphi)$. Since $e\in\Gamma_\delta(\varphi)$ was arbitrary, this proves the first claim. For the second claim, suppose there exist $\varphi\in[\varphi_a,\varphi_{a'}]$ and $e\in\Gamma_\delta(\varphi)$ such that $\tilde m_{c,c'}(\varphi; e)=0$. If $\varphi\notin\mathcal{U}_{c,c'}(\delta)$, then the first part implies that $\tilde m_{c,c'}(\varphi; e)$ is strictly positive when $m_{c,c'}(\varphi)>0$, and strictly negative when $m_{c,c'}(\varphi)<0$. In either case, $\tilde m_{c,c'}(\varphi; e)\neq 0$, contradicting the assumption that $\tilde m_{c,c'}(\varphi; e)=0$. Therefore, $\varphi\in\mathcal{U}_{c,c'}(\delta)$.
\end{proof}

\section{Experimental Setup}
\label{app:setup_dataset_construction}

In this work, we use the publicly available pretrained CLIP ViT-B/16 model released by OpenAI~\cite{radford2021learning}. All experiments are conducted using an NVIDIA 3090Ti GPU (24GB), a 16-core 3.9GHz Intel Core i9-12900K CPU, and 128GB RAM.

To evaluate semantic robustness under controllable semantic extents, we construct two complementary evaluation sets: (i) \emph{real-world} semantic-variation sequences collected from existing datasets, and (ii) \emph{synthetic} sequences that explicitly instantiate both ID and OOD semantic variations. We emphasize that building a fully controlled semantic-variation benchmark remains practically challenging, as the semantic edits must preserve class identity while avoiding spurious artifacts that can confound robustness analysis.

\textbf{Synthetic image sequences.}
Synthetic semantic-variation sequences are harder to construct in a controlled manner when the generator is misaligned with the target domain. In our preliminary attempts, domain-specialized generators (e.g., StyleGAN~\cite{karras2019style} and CycleGAN~\cite{zhu2017unpaired}) often produced unrealistic outputs when asked to enforce semantic shifts on out-of-domain objects, and diffusion-based generators (e.g., InstructPix2Pix~\cite{brooks2023instructpix2pix}) frequently introduced visible artifacts or drifted from the input identity, which injects unintended semantic factors. We therefore use multimodal large language models (MLLM) (e.g., GPT models~\cite{achiam2023gpt} and Seedream~\cite{guo2025seed1}) to construct synthetic image sequences. Concretely, for each dataset we choose three representative classes and sample seed images per class. For each seed image, we generate at least one pair of ID and OOD semantic shifts for each semantic. Each shift is instantiated as an ordered image sequence with an explicit semantic progression (e.g., for a food class such as \emph{samosa}, we generate shape variations from triangular to round as a controlled semantic factor). We curate generated outputs to ensure visual coherence and semantic correctness: samples with severe artifacts, identity mismatch, or off-target edits are discarded. Overall, while these synthetic sequences enable targeted ID/OOD semantic shifts, they also highlight that constructing a fully controlled semantic-variation dataset remains a challenge for robustness evaluation.

\textbf{Real-world image sequences.}
For each dataset, we ensure coverage by selecting a subset of classes that accounts for at least 10\% of all categories. For each selected class, we embed all images using the CLIP image encoder and perform automatic clustering in the embedding space based on pairwise distances, aiming to expose naturally occurring intra-class modes (e.g., color, shape, texture, background). We then specify a set of attribute prompts per class using prior knowledge of plausible variations (e.g., color for flowers, shape for certain foods), while keeping a shared prompt template (e.g., \texttt{``a photo of a \{attribute\} \{class\}''}) to reduce confounding from prompt format. For each prompt, we rank clusters by their proximity to the prompt embedding. Finally, we manually select representative images from the top-ranked clusters to form an ordered image sequence that exhibits a gradual semantic change. This manual step is necessary because purely automatic ranking can still surface outliers or cases where the target attribute is entangled with unintended factors.

\section{Cross-Modal Validity of Language Proxies}

\begin{figure}[h]
\centerline{\includegraphics[width=0.7\textwidth]{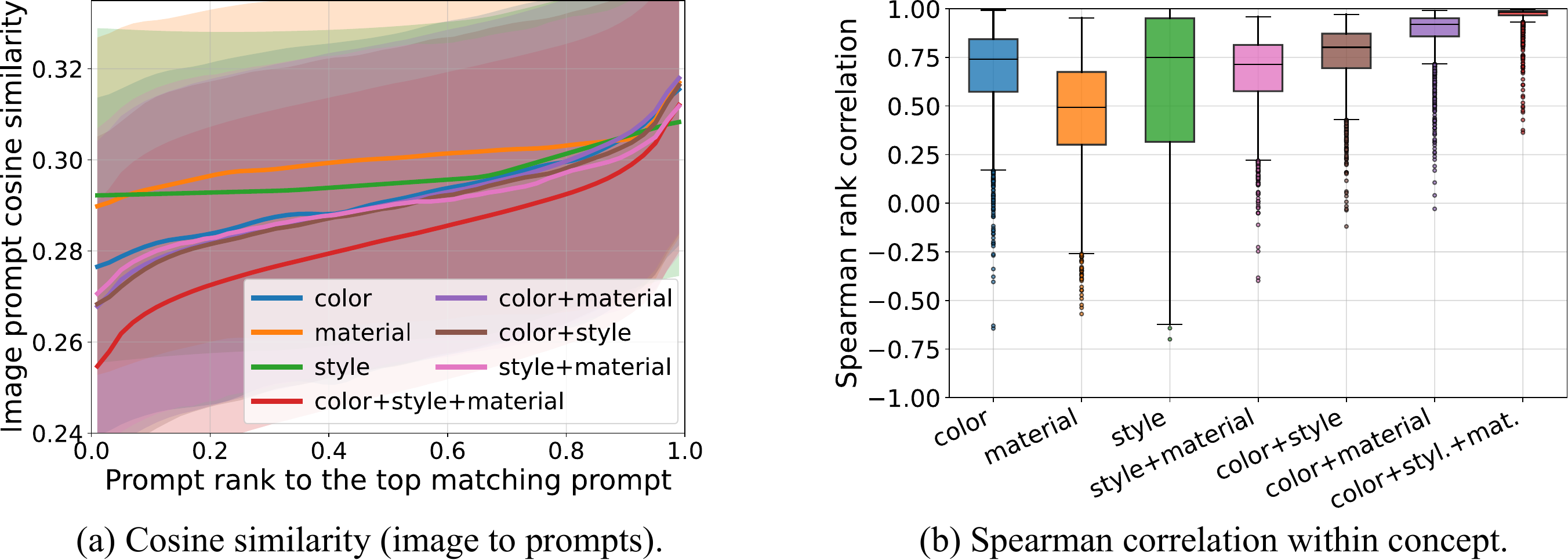}}
	\centering
    \caption{Cross-modal semantic consistency on an ImageNet subset. For each image, we form a prompt family by fixing a template and varying only the attribute word, e.g., \emph{``a photo of a [attribute] [class]''}. (a) We select the most similar prompt to the image as the anchor $t^*$, rank the remaining prompts by their cosine similarity to $t^*$ in the prompt embedding space, and plot image-to-prompt cosine similarity over the normalized text-induced rank. (b) Box plots of Spearman $\rho$ measure the agreement between the within-family prompt ranking by prompt-to-prompt cosine similarity to the anchor prompt $t^*$ and the prompt ranking by image-to-prompt cosine similarity to the image $x$.}	
	\label{fig_concept_embedding}
\end{figure}

Figure~\ref{fig_concept_embedding} evaluates cross-modal semantic consistency on an ImageNet subset using controlled semantic families (e.g., color, material, and style.) Within each semantic family, we fix a template and vary only the attribute word to form a prompt family, e.g., \emph{``a photo of a [color] [class]''}. For each image $x$, we compute its cosine similarity to every prompt in the family and select the most similar one as the anchor prompt $t^*$. We then rank the remaining prompts by their cosine similarity to $t^*$. Figure~\ref{fig_concept_embedding}(a) plots image-to-prompt cosine similarity over the normalized rank induced by this text-side ordering. Figure~\ref{fig_concept_embedding}(b) reports Spearman correlation $\rho$ between the prompt ranking by prompt-to-prompt similarity to the anchor prompt $t^*$ and the prompt ranking by image-to-prompt cosine similarity to the image $x$.

Across all concept families, we observe samples with $\rho$ close to $1$ in every family, indicating that the prompt ordering induced by prompt-to-prompt similarity to $t^*$ can be consistent with the ordering induced by image-to-prompt similarity. Higher correlation therefore suggests that relative similarity relations within the prompt family are frequently preserved across modalities, supporting the use of text prompts as semantic proxies in our framework.
Compositional prompt families, formed by combining multiple attributes such as color and style, tend to exhibit higher $\rho$, suggesting that richer semantic descriptions improve proxy quality under the same controlled setup. A plausible explanation is that additional attributes provide more context for grounding the varied word, reducing ambiguity and making the intended semantic direction more specific. This indicates that prompt engineering~\cite{zhou2022conditional,zhou2022learning} can serve as a practical mechanism to strengthen semantic specification when constructing language proxies. While proxy quality can vary across concepts and samples, the overall trend in Figure~\ref{fig_concept_embedding} supports our use of prompt families to parameterize semantic variation and to derive certificates based on the induced embedding geometry.



\section{Alignment to Visual Reference Transformations}
\label{app:alignment_visual_reference}

\begin{figure*}[h]
\centerline{\includegraphics[width=0.90\textwidth]{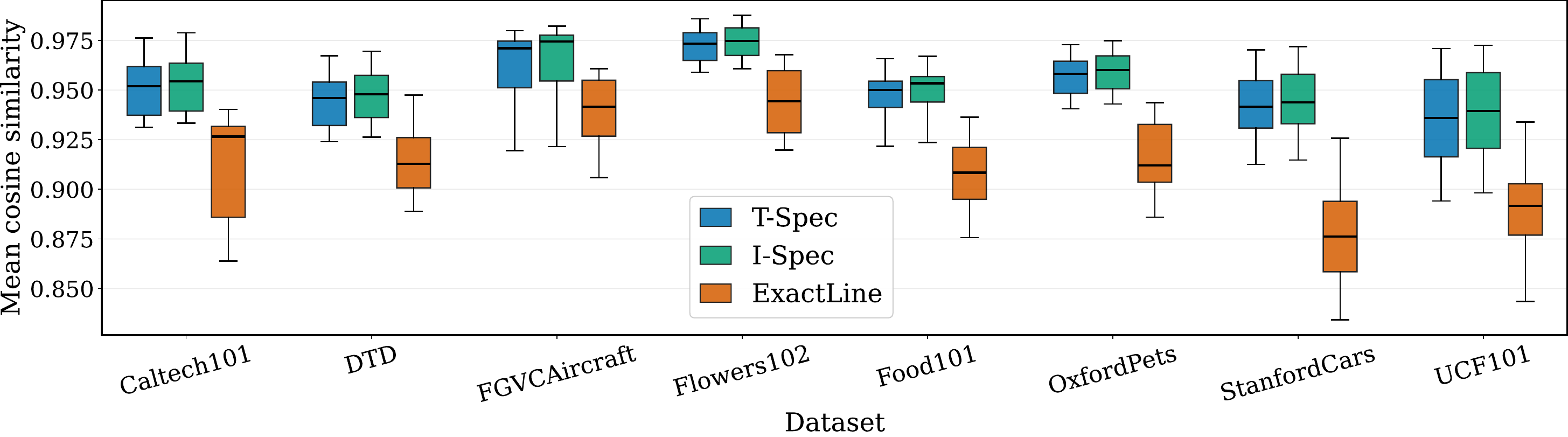}}
	\centering
    \caption{Alignment to semantic variation in the input space. For each dataset, we compare our constructed semantic transformation with a visual reference transformation constructed from the annotated image sequence with gradual semantic variations. For each semantic instance, we uniformly sample extents $\varphi\in[0,1]$, compute cosine similarity between the transformed embedding and the visual reference embedding at each $\varphi$, and average over extents to obtain a mean cosine similarity. Boxplots show the distribution of mean cosine similarity across datasets. Both our text-specified transformation (T-Spec) and image-specified transformation (I-Spec) consistently outperform ExactLine, showing strong alignment with semantic variation in the input space.}

\label{fig:transformation_alignment}
\end{figure*}

In this section, we evaluate whether our constructed semantic transformation $\gamma$ in the embedding space aligns with semantic variation in the input space. Using annotated image sequences that exhibit gradual changes of the target semantic, we build a visual reference transformation $\gamma^{\mathrm{ref}}$ and compare it with our established transformations including the text-specified (T-Spec) and image-specified (I-Spec) methods, as well as the ExactLine baseline. Specifically, we use a uniform extent grid $\Phi\coloneq\{0,\tfrac{1}{K-1},\ldots,1\}\subset[0,1]$ and compute the cosine similarity between $\gamma(\varphi)$ and $\gamma^{\mathrm{ref}}(\varphi)$ at each $\varphi\in\Phi$. We then average over $\Phi$ to obtain the mean cosine similarity,
$
\frac{1}{|\Phi|}\sum_{\varphi\in\Phi} \cos\!\big(\gamma(\varphi),\,\gamma^{\mathrm{ref}}(\varphi)\big).
$

Figure~\ref{fig:transformation_alignment} reports the distribution of mean cosine similarity scores over semantic instances for each dataset. T-Spec and I-Spec consistently achieve higher alignment than ExactLine, demonstrating that our constructed transformations closely track the embedding variation induced by annotated input sequences with gradual semantic changes. In contrast, linear interpolation in the input space can produce embedding paths that deviate substantially from the visual reference transformation. Overall, these results show that our constructed transformations can faithfully follow semantic variation as realized in the input space, providing a concrete input-space grounding for our embedding-space specification and supporting its use for certificate construction.

\section{Stability under Prompt Variations}

\begin{figure*}[h]
\centerline{\includegraphics[width=0.85\textwidth]{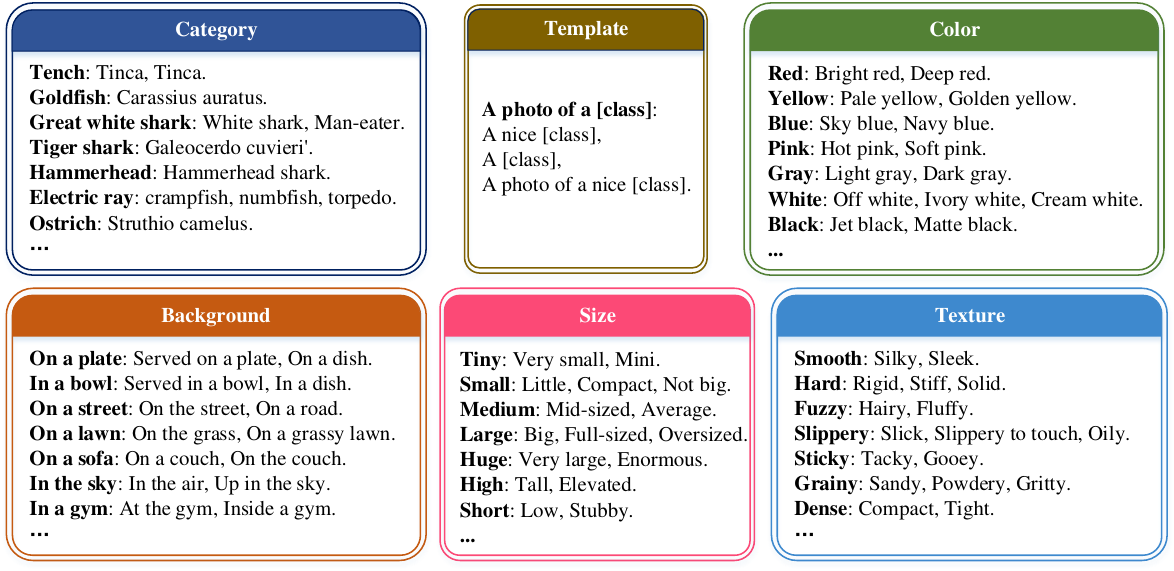}}
	\centering
    \caption{Examples of prompt variants on ImageNet. We consider category name variants from ImageNet annotations, template variants for VLM prompting, and attribute synonym sets for semantics such as color, background, size, and texture.
    Within each type, we form a prompt family by substituting only the corresponding word or phrase while keeping the remaining prompt structure fixed, e.g., \emph{``a photo of a Tench''} $\rightarrow$ \emph{``a photo of a Tinca''}.}
    \label{fig:prompt_synonym_examples}
\end{figure*}
\begin{figure}[h]
\centerline{\includegraphics[width=0.45\textwidth]{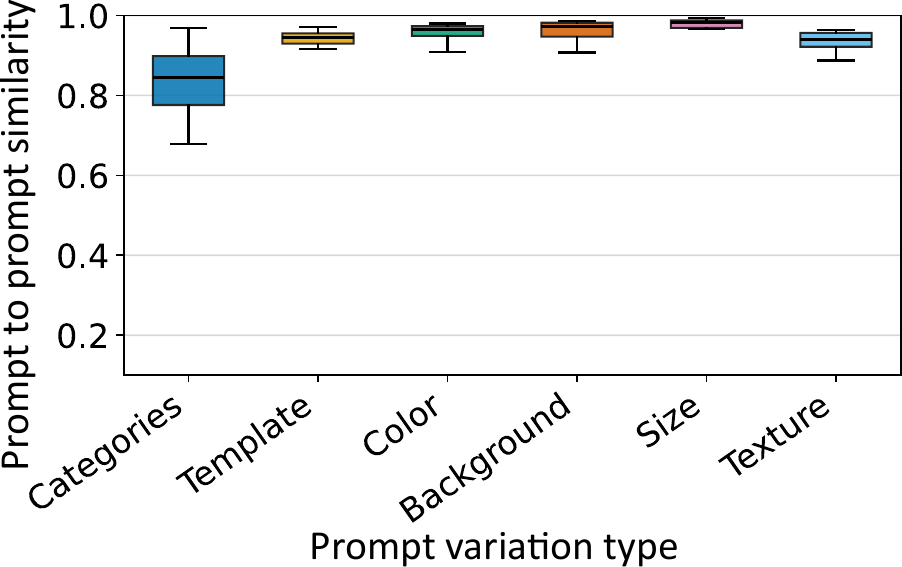}}
	\centering
    \caption{Prompt cosine similarity under prompt variations on an ImageNet subset. For each variation type, we compute cosine similarity between a reference prompt (shown in bold) and its variants within the same prompt family, and report the distribution across prompt families.}
\label{fig:prompt_stability_spearman}
\end{figure}

We evaluate whether the proxy-specified semantics remain stable under typical prompt variations. We consider three types of prompt variants, as illustrated in Figure~\ref{fig:prompt_synonym_examples}. 
These include category name variants from ImageNet annotations, commonly used prompt template variants for VLM prompting, and attribute synonym sets for semantics such as color, background, size, and texture. For each prompt variant, we vary only one prompt component at a time and measure the resulting change in image-to-prompt cosine similarity across variants.


Figure~\ref{fig:prompt_stability_spearman} reports prompt-to-prompt cosine similarity to quantify how much a prompt changes under different prompt variation types on an ImageNet subset. For each prompt family, we select a reference prompt and compute the cosine similarity between its prompt embedding and the embeddings of its variants. We aggregate these similarities across prompt families and report the resulting distributions for each variation type. The results show consistently high prompt cosine similarity for template variants and attribute synonym substitutions, indicating that these edits remain close to the reference phrasing in the prompt embedding space. Category name substitutions exhibit a larger spread. This is expected because ImageNet category annotations can include heterogeneous strings, such as scientific names, alternative spellings, and uncommon aliases. For example, a class may be annotated with both a common name and a scientific name (e.g., \emph{goldfish} vs.\ \emph{Carassius cuvieri}), or with alternative aliases (e.g., \emph{great white shark} vs.\ \emph{man-eater}). These substitutions can shift the prompt embedding more substantially than template or attribute edits. Nevertheless, the overall similarities remain high, indicating that category name variants still preserve relative image--prompt alignment for most classes.

Overall, these results demonstrate that using language as semantic proxies is robust to small prompt-level variations. This provides empirical support that the resulting transformations are not overly sensitive to the particular wording used to specify a target semantic.

\section{Semantic Strength via Similarity}

\begin{figure}[h]
\centerline{\includegraphics[width=0.5\textwidth]{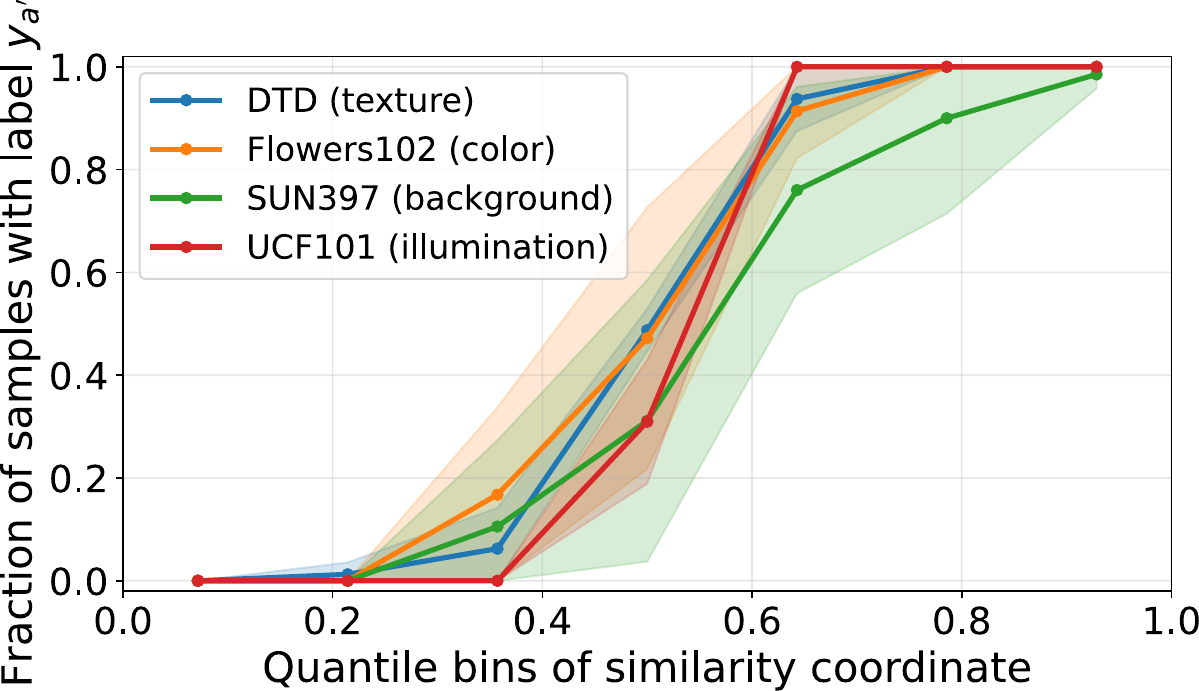}}
	\centering
    \caption{Semantic strength via similarity. For each dataset, we sample 20 semantic pairs $(a,a')$ and randomly split images from the two classes into two disjoint subsets. We compute class mean visual embeddings $\bar z_a,\bar z_{a'}$ on images from one subset and form the semantic direction by $v_{a,a'}=\bar z_{a'}-\bar z_a$. We then score images in the other subset by $t(x_i)=\langle z_i,v_{a,a'}\rangle$ with $z_i=f_{\mathrm{img}}(x_i)$, sort by $t(x_i)$, and partition into equal-count quantile bins. The plot shows the fraction of samples with label $y_{a'}$ across bins. Solid lines are averages over pairs and shaded bands are $95\%$ normal approximation confidence intervals across pairs.}
\label{fig:semantic_strength}
\end{figure}

Our certificates rely on the assumption that semantic strength can be measured by similarity in the embedding space. While we derive bounds that account for semantic misalignment, we also provide empirical evidence that tests whether samples from two semantic instances can be consistently ordered by a single similarity coordinate in VLMs. We consider four datasets that represent common semantic factors. DTD focuses on texture attributes (e.g., \textit{dotted}, \textit{knitted}, and \textit{woven}), making it a natural choice for texture variation. Flowers102 contains flower categories with stable color appearance within each category, making it suitable for testing color variation. SUN397 spans a broad range of scene types, and class differences often manifest through global background context and layout, for example \textit{airplane cabin} versus \textit{desert}. UCF101 consists of video frames collected under diverse capture conditions, where lighting and viewpoint changes can be prominent, and we use it as a practical proxy for illumination related variation. Together, these datasets allow us to test similarity-based semantic strength across heterogeneous semantic factors.


For each dataset, we instantiate semantic variations by sampling 20 semantic pairs $(a,a')$, where $a$ and $a'$, instantiated by two classes in the dataset. For each pair $(a,a')$, we collect images from the two classes and randomly split them into two disjoint subsets. For an image $x_i$, we denote its visual embedding by $z_i=f_{\mathrm{img}}(x_i)$, and denote its label within the pair by $y_i\in\{y_a,y_{a'}\}$. On one subset, we compute class mean embeddings $\bar z_a$ and $\bar z_{a'}$ by averaging $z_i$ within each class, and define the semantic direction $v_{a,a'}=\bar z_{a'}-\bar z_a$. On the other subset used for evaluation, we compute a similarity coordinate for each image as $t(x_i)=\langle z_i, v_{a,a'}\rangle$.
We pool the images in the evaluation subset from both classes, sort them by $t(x_i)$, and partition them into multiple equal-count quantile bins. For each bin $\mathcal{B}$, we compute the fraction of samples labeled $y_{a'}$ as $\frac{1}{|\mathcal{B}|}\sum_{x_i\in\mathcal{B}}\mathbb{I}[y_i=y_{a'}]$, where $y_i$ denotes the label of $x_i$ within the sampled pair. Repeating this procedure over the sampled pairs yields one sequence of fractions per pair. Figure~\ref{fig:semantic_strength} aggregates these results for each dataset by plotting the average across pairs within each bin as the solid line, with a shaded band showing a $95\%$ normal approximation confidence interval for this average based on the empirical variability across pairs.

Figure~\ref{fig:semantic_strength} shows that the fraction of samples with label $y_{a'}$ increases across the quantile bins of the similarity coordinate on all four datasets. The transition is more concentrated for DTD and Flowers102 and more gradual for SUN397, which is expected given the larger intra class variation in scene categories. Overall, the results indicate that semantic strength is captured by similarity in the embedding space across different semantic factors.



\end{document}